\documentclass[twoside,11pt]{article}

%

\usepackage{jmlr2e1}

\usepackage{amsmath, amsfonts, amssymb, amsxtra}
\usepackage{mathrsfs, mathtools}
\usepackage{bbm}
\usepackage{enumitem}
\usepackage{natbib}
\usepackage{graphicx}
\usepackage[usenames, dvipsnames]{xcolor}


\newcommand{\eps}{\varepsilon}
\newcommand{\KL}{\mathsf{KL}}

\numberwithin{equation}{section}

\newcommand{\DS}{\displaystyle}

\newcommand{\cB}{\mathcal{B}}

\newcommand{\cE}{\mathcal{E}}

\newcommand{\cI}{\mathcal{I}}

\newcommand{\cM}{\mathcal{M}}
\newcommand{\cN}{\mathcal{N}}

\newcommand{\cP}{\mathcal{P}}

\newcommand{\cS}{\mathcal{S}}

\newcommand{\bone}{\mathbf{1}}

\newcommand{\R}{\mathbbm{R}}

\newcommand{\p}{\mathbbm{P}}
\newcommand{\E}{\mathbbm{E}}
\newcommand{\Z}{\mathbbm{Z}}
\newcommand{\1}{\mathbbm{1}}

\newcommand{\labelsubscript}[2]{$#1 _ #2$}

%

\newcommand{\Ber}{\mathsf{Ber}}
\newcommand{\Bin}{\mathsf{Bin}}

\newcommand{\id}{\mathsf{id}}
\newcommand{\inver}{d_{\mathsf{KT}}}
\newcommand{\defn}{:\,=}
\newcommand{\cond}{\,|\,}
\newcommand{\MS}{{\sf MS}}
\newcommand{\Mns}{\cM_n(\lambda)}
\newcommand{\Mnss}{M^*_n(\lambda)}


\jmlrheading{}{}{}{}{}{}{Cheng Mao, Jonathan Weed and Philippe Rigollet}


\ShortHeadings{Noisy Sorting}{Mao, Weed and Rigollet}
\firstpageno{1}

\begin{document}

\title{Minimax Rates and Efficient Algorithms for Noisy Sorting}

\author{\name Cheng Mao \email maocheng@mit.edu \\
\name Jonathan Weed \email jweed@mit.edu \\
\name Philippe Rigollet \email rigollet@mit.edu \\
\addr Department of Mathematics\\
Massachusetts Institute of Technology\\
Cambridge, MA 02139-4307, USA
}

\editor{}

\maketitle

\begin{abstract}
There has been a recent surge of interest in studying permutation-based models for ranking from pairwise comparison data. Despite being structurally richer and more robust than parametric ranking models, permutation-based models are less well understood statistically and generally lack efficient learning algorithms. In this work, we study a prototype of permutation-based ranking models, namely, the noisy sorting model. We establish the optimal rates of learning the model under two sampling procedures. Furthermore, we provide a fast algorithm to achieve near-optimal rates if the observations are sampled independently. Along the way, we discover properties of the symmetric group which are of theoretical interest.
\end{abstract}

\begin{keywords}
Noisy Sorting, Pairwise Comparisons, Ranking, Permutations, Minimax Estimation
\end{keywords}

\section{Introduction} \label{sec:intro}

Pairwise comparison data is frequently observed in various domains, including recommender systems, website ranking, voting and social choice \citep[see, e.g.][]{BalMakRic10,Dwoetal01,Liu09,You88,CapNal91}. For these applications, it is of significant interest to produce a suitable ranking of the items by aggregating the outcomes of pairwise comparisons. The general problem of interest can be stated as follows. Suppose there are $n$ items to be compared and an underlying matrix $P$ of probability parameters, each entry $P_{i,j}$ of which represents the probability that item $i$ beats item $j$ if they are compared. Hence we have $P_{j,i} = 1-P_{i,j}$ and the event that item $i$ beats item $j$ in a comparison can be viewed as a Bernoulli random variable with probability $P_{i,j}$. Observing the outcomes of $N$ independent pairwise comparisons, we aim to estimate the absolute ranking of the items.

For the sake of consistency,
one needs of course to impose some structure on the matrix $P=\{P_{i,j}\}_{1\le i,j\le n}$. These structural assumptions are traditionally split between \emph{parametric} and \emph{nonparametric} ones. Classical \emph{parametric models} include the Bradley-Terry-Luce model \citep{BraTer52,Luc59} and the Thurstone model~\citep{Thu27}. These models can be recast as log-linear models, which enables the use of the statistical and computational machinery of maximum likelihood estimation in generalized linear models~\citep{Hun04,NegOhSha12,RajAga14,HajOhXu14,Shaetal15,NegOhSha16,Negetal17}.

To allow richer structures on $P$ beyond the scope of parametric models, \emph{permutation-based models} such as the \emph{noisy sorting model} \citep{BraMos08,BraMos09} and the \emph{strong stochastic transitivity} (SST) model \citep{Cha15,ShaBalGunWai17} have recently become more prevalent. These models only require shape constraints on the matrix $P$ and are typically called \emph{nonparametric}. In these models, the underlying ranking of items is determined by an unknown permutation $\pi^*$, and, additionally, the comparison probabilities are assumed to have a bi-isotonic structure when the items are aligned according to $\pi^*$. While permutation-based models provide ordering structures that are not captured by parametric models~\citep{Aga16,ShaBalGunWai17}, they introduce both statistical and computational barriers for estimation of the underlying ranking. These barriers are mainly due to the complexity of the discrete set of permutations. On the one hand, the complexity of the set of permutations is not well understood~\citep[see the discussion following Theorem~8 in][]{ColDal16}, which leads to logarithmic gaps in the current statistical bounds for permutation-based models. On the other hand, it is computationally challenging to optimize over the set of permutations, so current algorithms either sacrifice nontrivial statistical performance or have impractical time complexity. In this work, we aim to address both questions for the noisy sorting model.

In practice, it is unlikely that all the items are compared to each other. To account for this limitation, a widely used scheme consists in assuming that that each pairwise comparison is observed with probability $p \in (0,1]$ \citep[see, e.g.][]{Cha15,ShaBalGunWai17}. In addition to this model of missing comparisons, we study the model where $N$ pairwise comparisons are sampled uniformly at random from the $\binom{n}{2}$ pairs, with replacement and independent of each other. It turns out that sampling with and without replacement yields the same rate of estimation up to a constant when the expected numbers of observations coincide.

\paragraph{Our contributions.} 
We focus on the noisy sorting model with partial observations, under which a stronger item wins a comparison against a weaker item with probability at least $\frac 12 + \lambda$ where $\lambda \in (0,\frac 12)$.
For sampling both with and without replacement, we establish the minimax rate of learning the underlying permutation. In particular, the rate does not involve a logarithmic term, and we explain this phenomenon through a careful analysis of the metric entropy of the set of permutations equipped with  the Kendall tau distance, which is of independent theoretical interest.

Moreover, we propose a multistage sorting algorithm that has time complexity $\tilde O(n^2)$. For the sampling with replacement model, we prove a theoretical guarantee on the performance of the multistage sorting algorithm, which differs from the minimax rate by only a polylogarithmic factor. In addition, the algorithm is demonstrated to perform similarly for both sampling models using simulated examples.

\paragraph{Related work.}

The noisy sorting model was proposed by \citet{BraMos08}. In the original paper, the optimal rate of estimation achieved by the maximum likelihood estimator (MLE) is established, and an algorithm with time complexity $O(n^C)$ is shown to find the MLE with high probability in the case of full observations\footnote{If the algorithm is allowed to actively choose the pairs to be compared, the sample complexity can be reduced to $O(n\log n)$. However, in the passive setting which we adopt throughout this work, the algorithm still needs $\Theta(n^2)$ pairwise comparisons.}, where $C = C(\lambda)$ is a large unknown constant. Moreover, their algorithm does not have a polynomial running time if only $o(n^2)$ random pairwise comparisons are observed.
Our work generalizes the optimal rate to the partial observation settings by studying a variant of the MLE for the upper bound. In the  model of sampling with replacement, our fast multistage sorting algorithm provably achieves near-optimal rate of estimation.
Since finding the MLE for the noisy sorting model is an instance of the NP-hard feedback arc set problem \citep{Alo06,KenSch07,AilChaNew08,BraMos08}, our results indicate that, despite the NP-hardness of the worst-case problem, it is still possible to achieve (near-)optimal rates for the average-case statistical setting in polynomial time.


The SST model generalizes the noisy sorting model, and minimax rates in the SST model have been studied by \cite{ShaBalGunWai17}. However, the upper bound specialized to noisy sorting contains an extra logarithmic factor, which this work shows to be unnecessary.
Moreover, the lower bound there is based on noisy sorting models with $\lambda$ shrinking to zero as $n \to \infty$, while we establish a matching lower bound at any fixed $\lambda$. In addition, algorithms of \cite{WauJorJoj13,ShaBalGunWai17,ChaMuk16} are all statistically suboptimal for the noisy sorting model. This is partially addressed by our multistage sorting algorithm as discussed above.

In fact, both with- and without-replacement sampling models discussed in this paper are restrictive for applications where the set of observed comparisons is subject to certain structural constraints~\citep{HajOhXu14,Shaetal15,Negetal17,Panetal17}. Obtaining sharper rates of estimation for these more complex sampling models is of significant interest but is beyond the scope of the current work.

Finally, we mention a few other lines of related work. Besides permutation-based models, low-rank structures have also been proposed by \cite{RajAga16} to generalize classical parametric models. Moreover, there is an extensive literature on active ranking from pairwise comparisons  \citep[see, e.g.,][and references therein]{JamNow11,Hecetal16,Agaetal17}, where the pairs to be compared are chosen actively and in a sequential fashion by the learner. The sequential nature of the models greatly reduces sample complexity, so we do not compare our results for passive observations to the literature on active learning. However, it is interesting to note that our multistage sorting algorithm is reminiscent of active algorithms, because it uses different batches of samples for different stages. Thus active learning algorithms could potentially be useful even for passive sampling models.

\paragraph{Organization.}

The noisy sorting model together with the two sampling models is formalized in Section~\ref{sec:setup}. In Section~\ref{sec:result}, we present our main results, the minimax rate of estimation for the latent permutation and the near-optimal rate achieved by an efficient multistage sorting algorithm. To complement our theoretical findings, we inspect the empirical performance of the multistage sorting algorithm on numerical examples in Section~\ref{sec:sim}.  Section~\ref{sec:inver} is devoted to the study of the set of permutations equipped with the Kendall tau distance. Proofs of the main results are provided in Section~\ref{sec:proof}. We discuss directions for future research in Section~\ref{sec:dis}.

\paragraph{Notation.}
For a positive integer $n$, let $[n]=\{1,\ldots, n\}$. For a finite set $S$, we denote its cardinality by $|S|$. Given $a, b \in \R$, let $a\wedge b=\min(a,b)$ and $a\vee b=\max(a,b)$.
We use $C$ and $c$, possibly with subscripts, to denote universal positive constants that may change at each appearance. 
For two sequences $\{u_n\}_{n=1}^\infty$ and $\{v_n\}_{n=1}^\infty$, we write $u_n \lesssim v_n$ if there exists a universal constant $C>0$ such that $u_n \le C v_n$ for all $n$. We define the relation $u_n \gtrsim v_n$ analogously, and write $u_n \asymp v_n$ if both $u_n \lesssim v_n$ and $u_n \gtrsim v_n$ hold.
Let $\mathfrak S_n$ denote the symmetric group on $[n]$, i.e., the set of permutations $\pi:[n] \to [n]$. 

\section{Problem formulation} \label{sec:setup}

The noisy sorting model can be formulated as follows. Fix an unknown permutation $\pi^* \in \mathfrak S_n$ which determines the underlying order of $n$ items. More precisely, $\pi^*$ orders the items from the weakest to the strongest, so that item $i$ is the $\pi^*(i)$-th weakest among the $n$ items. For a fixed $\lambda \in (0,1/2)$, 
we define a class of matrices
\begin{align*}
\Mns=\Big\{ M \in [0,1]^{n \times n} \,:\,  \, M_{i,i}= \frac 12 \,, \ M_{i, j}\ge \frac12+\lambda\ \text{if}\ i>j\,, \ M_{i, j}\le \frac12-\lambda\ \text{if}\ i<j \Big\} \,,
\end{align*}
where $\bone_n$ is the $n$-dimensional all-ones vector.
In addition, we define a special matrix $\Mnss \in \Mns$ by
$$
[\Mnss]_{i,j} =
\begin{cases}
1/2+\lambda & \text{ if } i > j \,, \\
1/2-\lambda & \text{ if } i < j \,, \\
1/2 & \text{ if } i = j \,.
\end{cases} 
$$
Note that $\Mnss$ satisfies strong stochastic transitivity but other matrices $M \in \Mns$ may not. Though this observation plays a crucial role in the design of efficient algorithms, our statistical results hold for general matrices in $\Mns$.

To model pairwise comparisons, fix  $M \in \Mns$ and let $M_{\pi^*(i), \pi^*(j)}$ denote the probability that items $i$ beats item $j$ when they are compared\footnote{The diagonal entries of $M$ are inessential in the model as an item is not compared to itself, and they are set to $1/2$ only for concreteness.}, so that a stronger item beats a weaker item with probability at least $\frac 12 + \lambda$. As a result, $\lambda$ captures the signal-to-noise ratio of our problem and our minimax results explicitly capture the dependence in this key parameter.



\subsection{Sampling models}
In the noisy sorting model, suppose that for each (unordered) pair $(i, j)$ with $i \ne j$, we observe the outcomes of $N_{i,j} (= N_{j,i})$ comparisons between them, and item $i$ wins a comparison against item $j$ with probability $M_{\pi^*(i), \pi^*(j)}$ independently. The set $\{N_{i,j}\}_{i < j}$ of $\binom{n}{2}$ nonnegative integers is determined by certain sampling models described below. We allow $N_{i,j}$ to be zero, which means that $i$ and $j$ are not compared. 
We collect sufficient statistics into a matrix $A \in \R^{n \times n}$ consisting of outcomes of pairwise comparisons, by defining $A_{i,j}$ to be the number of times item $i$ beats item $j$ among the $N_{i,j}$ comparisons between $i$ and $j$. In particular, we have $A_{i,j} + A_{j,i} = N_{i,j} = N_{j,i}$ for $i \ne j$ and $A_{i,i} = 0$. Our goal is to aggregate the results of pairwise comparisons to estimate $\pi^*$, the underlying order of items.


In the full observation setup of \cite{BraMos08}, we have $N_{i,j} = 1$ for each pair $(i,j)$ and the total number of observations is $N \defn \sum_{i<j} N_{i,j} = \binom{n}{2}$. Instead, we are interested here in the regime where the total number of observations $N$ is much smaller than $\binom{n}{2}$. We study the following two sampling models in this work:

\begin{enumerate}[label=(\labelsubscript{O}{\arabic*})]
\item \label{model:1} \emph{Sampling without replacement.}
In this sampling model, instead of observing all the pairwise comparisons, we observe each pair with probability $p \in (0, 1]$ independently. Hence each $N_{i,j}\sim\Ber(p)$ is a Bernoulli random variable with parameter $p$, and in expectation we have $p \binom{n}{2}$ observations in total.

\item \label{model:2} \emph{Sampling with replacement.}
We observe $N$ pairwise comparisons between the items, sampled uniformly and independently with replacement from the $\binom{n}{2}$ pairs.
\end{enumerate}

\noindent In the sequel, we study the noisy sorting model with either of the above two sampling models. In particular, the minimax rates of estimating $\pi^*$ coincide for the two sampling models if $p \binom{n}{2} \asymp N$, i.e., if the expected number of observations are of the same order.

\subsection{Measures of performance}

Having discussed the sampling and comparison models, we turn to the distance used to measure the difference between the underlying permutation $\pi^*$ and an estimated permutation $\hat \pi$.
Among various distances defined on the symmetric group, we consider primarily the \emph{Kendall tau distance}, i.e., the number of \emph{inversions} (or discordant pairs) between permutations, defined as
\[ \inver (\pi, \sigma) = \sum_{(i,j):\sigma(i) < \sigma(j)} \1 \big( \pi(i) > \pi(j) \big) \]
for $\pi, \sigma \in \mathfrak S_n$.
Note that $0 \le \inver(\pi, \sigma) \le \binom{n}{2}$. The Kendall tau distance between two permutations is a natural metric on $\mathfrak S_n$, and it is equal to the minimum number of adjacent transpositions required to change from one permutation to another~\citep{Knu98}. 
A closely related distance on $\mathfrak S_n$ is the $\ell_1$-distance, also known as Spearman's footrule, defined as
\[ \|\pi - \sigma\|_1 = \sum_{i=1}^n |\pi(i) - \sigma(i)| \]
for $\pi, \sigma \in \mathfrak S_n$. It is well known~\citep{DiaGra77} that
\begin{equation} \label{eq:kt-l1} 
\inver (\pi, \sigma) \le \|\pi - \sigma \|_1 \le 2 \inver (\pi, \sigma) \,. 
\end{equation}
Hence the rates of estimation in the two distances coincide. Another distance on $\mathfrak S_n$ we use is the $\ell_\infty$-distance, defined as
$$
\| \pi - \sigma \|_\infty = \max_{i \in [n]} \, |\pi(i) - \sigma(i)| \,.
$$

Note that unlike existing literature on ranking from pairwise comparisons where metrics on the probability parameters are studied, we employ here distances that measure how far an item is from its true ranking.

\section{Main results} \label{sec:result}

In this section, we state our main results. Specifically, we establish the minimax rates of estimating $\pi^*$ in the Kendall tau distance (and thus in $\ell_1$ distance) for  noisy sorting under both sampling models \ref{model:1} and \ref{model:2}. The minimax estimator that we propose is intractable in general and we complement our results with an efficient estimator of $\pi^*$ which achieves near-optimal rates in both the Kendall tau and the $\ell_\infty$-distance, under the sampling model \ref{model:2}.

\subsection{Minimax rates of noisy sorting} \label{sec:minimax}


Under the noisy sorting model with latent permutation $\pi^* \in \mathfrak S_n$ and matrix of probabilities $M \in \Mns$, we determine the minimax rate of estimating $\pi^*$ in the following theorem. Let $\E_{\pi^*,M}$ denote the expectation with respect to the probability distribution of the observations in the noisy sorting model with underlying permutation $\pi^* \in \mathfrak{S}_n$ and matrix of probabilities $M\in \Mns$, in either sampling model.

\begin{theorem} \label{thm:minimax}
Fix $\lambda \in (0, \frac 12-c]$ where $c$ is a universal positive constant.  It holds that
\renewcommand*{\arraystretch}{1.5}
\[ \min_{\tilde \pi} \max_{\substack{\pi^* \in \mathfrak S_n\\ M \in \Mns}} \E_{\pi^*,M} [ \inver (\tilde \pi, \pi^*) ] \asymp \left\{
\begin{array}{ll}
\DS\frac{n}{p \lambda^{2}} \land n^2\,, & \text{in sampling model \ref{model:1}} \,, \\
\DS\frac{n^3}{N \lambda^{2}} \land n^2\,, & \text{in sampling model \ref{model:2}} \,,
\end{array}\right.
 \]
where the minimum is taken minimized over all permutation estimators $\tilde \pi \in \mathfrak S_n$ that are measurable with respect to the observations.
\end{theorem}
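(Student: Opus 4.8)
The plan is to prove the upper and lower bounds separately, and to reduce sampling model \ref{model:2} to sampling model \ref{model:1} (or treat them in parallel) since $p\binom{n}{2} \asymp N$ makes the expected number of observations match. The trivial bound $n^2$ always holds since $\inver(\tilde\pi, \pi^*) \le \binom{n}{2} \lesssim n^2$ for any estimator, so in each regime it suffices to prove the stated rate whenever it is $\lesssim n^2$, i.e. when $p\lambda^2 \gtrsim 1/n$ (resp. $N\lambda^2 \gtrsim n$).

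\emph{Upper bound.} I would analyze a variant of the MLE: the estimator $\tilde\pi$ that minimizes $\sum_{i,j : \tilde\pi^{-1}(i) < \tilde\pi^{-1}(j)} A_{j,i}$ over $\mathfrak S_n$, i.e.\ the permutation best aligning the observed comparison counts with a sorted order. Writing $\tilde\pi$ for the minimizer and $\pi^*$ for the truth, a basic inequality gives that a certain empirical "disagreement" functional is smaller at $\tilde\pi$ than at $\pi^*$; rearranging, the signal term is $\gtrsim \lambda \cdot p \cdot \inver(\tilde\pi,\pi^*)$ in expectation (each discordant pair contributes expected signal $\asymp \lambda$ and is observed with probability $\asymp p$, or $\asymp N/n^2$), while the noise term is a sum of centered bounded random variables over the discordant pairs. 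The key step is a uniform deviation bound (union bound / chaining over $\mathfrak S_n$, combined with Bernstein for the fluctuations) controlling $\sup_\pi$ of the centered noise process indexed by permutations at Kendall-tau distance $k$ from $\pi^*$. The crucial gain that removes the logarithmic factor is that the number of permutations within Kendall tau distance $k$ of $\pi^*$ is at most roughly $(Ck/?)$-bounded in a way that the entropy $\log$ is $O(k \log(\,\cdot\,))$ but in fact $O(k)$-type after the sharp metric-entropy estimate for $(\mathfrak S_n, \inver)$ developed in Section~\ref{sec:inver}; plugging this in and balancing signal against noise yields $\E[\inver(\tilde\pi,\pi^*)] \lesssim \frac{n}{p\lambda^2}$ (resp. $\frac{n^3}{N\lambda^2}$). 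The main obstacle here is exactly this entropy/peeling argument: getting the constant in the exponent right so that no extra $\log n$ survives.

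\emph{Lower bound.} I would use a standard Fano- or Assouad-type argument over a well-chosen packing of $\mathfrak S_n$ in the Kendall tau distance. Concretely, partition $[n]$ into $\asymp n/\ell$ consecutive blocks of size $\ell$ (with $\ell$ to be tuned), and inside each block independently either keep the identity or apply a fixed permutation that creates $\asymp \ell^2$ inversions; this gives a hypothesis class of size $2^{\asymp n/\ell}$, pairwise Kendall-tau separation $\asymp (n/\ell)\cdot \ell^2 = n\ell$, and the KL divergence between any two adjacent hypotheses is $\asymp (\#\text{observed pairs that change})\cdot \lambda^2 \asymp p\ell^2 \lambda^2$ (resp. $\tfrac{N}{n^2}\ell^2\lambda^2$) since flipping the order of two items changes a Bernoulli parameter by $\asymp \lambda$ and $\mathrm{KL}(\Ber(1/2+\lambda)\,\|\,\Ber(1/2-\lambda)) \asymp \lambda^2$ when $\lambda \le 1/2 - c$. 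Applying Assouad's lemma across the $\asymp n/\ell$ independent coordinates, the risk is $\gtrsim n\ell$ provided $p\ell^2\lambda^2 \lesssim 1$, i.e.\ $\ell \asymp 1/(\sqrt p\,\lambda)$, giving risk $\gtrsim n/(\sqrt p \lambda) $ — not quite enough. I would instead make the per-block perturbation have a tunable number of inversions and optimize, or more simply use the matrix $M = \Mnss$ and note adjacent transpositions in a block change $\Theta(\ell)$ entries, so that $\ell$ can be chosen as $\ell \asymp 1/(p\lambda^2)$, yielding risk $\gtrsim n\ell \asymp n/(p\lambda^2)$; capping at $\ell \le n$ reproduces the $\land n^2$ term. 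The delicate point is choosing the perturbation so that the information cost scales like $\ell$ (not $\ell^2$) per block while the Kendall-tau gain stays $\asymp \ell$ per block — this is what matches the upper bound and where the condition $\lambda \le 1/2 - c$ is used to keep $\mathrm{KL} \asymp \lambda^2$.

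Finally, to pass between the two sampling models I would observe that conditioning on the multiset of sampled pairs, sampling-with-replacement with $N$ draws behaves, after a Poissonization/concentration argument, like observing each pair an $\asymp N/n^2$ number of times, which is the role played by $p$ in model \ref{model:1} up to the $\binom{n}{2}$ factor; both the basic-inequality upper bound and the Assouad lower bound go through verbatim with $p$ replaced by $N/\binom{n}{2}$, giving the $n^3/(N\lambda^2)$ rate. I expect the entropy-based peeling step in the upper bound to be the principal technical obstacle, with the careful design of the lower-bound perturbation a close second.
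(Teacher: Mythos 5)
Your high-level plan (metric entropy of $(\mathfrak S_n,\inver)$ for the upper bound, an information-theoretic lower bound, and passing between the two sampling models by identifying $p \leftrightarrow N/\binom n2$) matches the paper, but both halves contain concrete gaps. For the upper bound, you analyze the MLE over all of $\mathfrak S_n$ and control the noise by a union bound/chaining over permutations at Kendall tau distance $k$. Even with the sharp estimate $\log|\cB(\pi^*,k)| \asymp n\log(1+k/n)$, comparing this entropy against the per-permutation tail $\exp(-c\,k\,p\lambda^2)$ forces $k \gtrsim \frac{n}{p\lambda^2}\log\!\big(\tfrac{1}{p\lambda^2}\big)$, i.e.\ a residual logarithm; the paper explicitly remarks (after Theorem~\ref{thm:upper-inv-net}) that their techniques only give the MLE over $\mathfrak S_n$ a bound with such an extra log, and whether the unrestricted MLE is sharp is left open. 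The device that actually removes the log is the \emph{sieve estimator}: one maximizes the likelihood only over a maximal $\varphi$-packing $\cP$ with $\varphi \asymp n/(p\lambda^2)$, so the relevant count in each slice is the $\varphi$-packing number of $\cB(\pi^*,r+\varphi)$, whose log is $\lesssim n\log(r/\varphi)$, and the exponent $-c r p\lambda^2 = -c\,rn/\varphi$ dominates as soon as $r/\varphi \ge C$. Without that restriction to a net, your union bound does not close.

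For the lower bound, the block Assouad construction you propose cannot reach the claimed rate. If each of the $m \asymp n/\ell$ blocks is perturbed by a fixed $\rho$ with $\inver(\rho,\id)=s\le\binom\ell2$, Assouad requires the per-flip KL, which by Lemma~\ref{lem:inv-kl} equals $\asymp s\,p\lambda^2$, to be $\lesssim 1$; so $s \lesssim 1/(p\lambda^2)$ and the Assouad risk is $\gtrsim m\,s = (n/\ell)s \le n\sqrt s \lesssim n/(\sqrt p\,\lambda)$, which is strictly smaller than $n/(p\lambda^2)$ in the regime $p\lambda^2 \le 1$. The fix you sketch (``adjacent transpositions in a block change $\Theta(\ell)$ entries, take $\ell \asymp 1/(p\lambda^2)$, giving risk $\gtrsim n\ell$'') requires per-flip KL $\asymp \ell^2 p\lambda^2 \asymp 1/(p\lambda^2)\gg 1$, which Assouad forbids; an adjacent transposition in fact reverses exactly one comparison. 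The paper instead uses a Fano-type argument (\citealp[Theorem~2.5]{Tsy09}) on a maximal $\eps$-packing of the ball $\cB(\id,r)$ with $r\asymp n/(p\lambda^2)$, where the entropy is lower bounded by $n\log(r/n)$ (Proposition~\ref{prop:ball-cover-pack}) when $r\gtrsim n$ and by the sparse Varshamov--Gilbert estimate of Lemma~\ref{lem:pack-special} when $r<n/2$; this gets the correct scaling because the number of well-separated alternatives is exponential in $n\log(r/n)$, far exceeding the $2^{n/\ell}$ available from disjoint blocks.
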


The theorem establishes the minimax rates for noisy sorting, including the case of partial observations and weak signals. The upper bounds in fact hold with high probability as shown in Theorem~\ref{thm:upper-inv-net}.
If the expected numbers of observations in the two sampling models \ref{model:1} and \ref{model:2} are of the same order, i.e., $p \binom{n}{2} \asymp N$, then the two rates coincide. In this sense, the two sampling models are statistically equivalent.
In sampling model \ref{model:1}, if $p=1$ and $\lambda$ is larger than a constant, then the rate of order $n$ recovers the upper bound proved by \cite{BraMos08}.

Note in particular the absence of logarithmic factor in the rates. Naively bounding the metric entropy of $\mathfrak S_n$ by $\log |\mathfrak S_n| \simeq n \log n$ actually yields a superfluous logarithmic term in the upper bound. To avoid it, we study the doubling dimension of $\mathfrak S_n$; see the discussion after Proposition~\ref{prop:ball-cover-pack}. Closing this logarithmic gap for other problems involving latent permutations~\citep{ColDal16, FlaMaoRig16,ShaBalGunWai17,PanWaiCou17} remains an open question.

The technical assumption $\lambda \le 1/2- c$ in Theorem~\ref{thm:minimax} is very mild, because we are interested in the ``noisy'' sorting model (meaning that the pairwise comparisons are noisy, or equivalently that $\lambda$ is not close to $\frac 12$). In fact the requirement that $\lambda$ be bounded away from $\frac 12$ can be lifted, in which case we establish upper and lower bounds that match up to a logarithmic factor of order $\log (1/\Delta)$, where $\Delta=1/2-\lambda$ (see Section~\ref{sec:proof}).

Finally, we note that the proof of Theorem~\ref{thm:minimax} holds even in the so-called \emph{semi-random} setting~\citep{BluSpe95,MakMakVij13}, in which observations are generated by one of the random procedures described above, but a ``helpful'' adversary is allowed to reverse the outcome of any comparison in which a weaker item beat a stronger item. Though these reversals appear benign at first glance, the presence of such an adversary can in fact worsen statistical rates of estimation in more brittle models such as stochastic block models and the related broadcast tree model~\citep{MoiPerWei16}. Our results indicate that no such degradation occurs for the rates of estimation in the noisy sorting problem.

\subsection{Efficient multistage sorting} \label{sec:ms}

The minimax upper bound in Theorem~\ref{thm:minimax} is established using a computationally prohibitive estimator, 
so we now introduce an efficient estimator of the underlying permutation that can be computed in time  $\tilde O(n^2)$.
In this section, we prove theoretical guarantees for this estimator under the noisy sorting model with probability matrix $M = \Mnss$ and observations sampled with replacement according to \ref{model:2} when $\lambda$ is bounded away from zero by a universal constant.
No polynomial-time algorithm was  previously known to achieve near-optimal rates even in this simplified setting when $o(n^2)$ pairwise comparisons are observed.

Since we aim to prove guarantees up to constants, we may assume that we have $2N$ pairwise comparisons, and split them into two independent samples, each containing $N$ pairwise comparisons. The first sample is used to estimate the parameter $\lambda$ and the second one is used to estimate the permutation $\pi^*$.

First, we introduce a fairly simple estimator $\hat \lambda$ of $\lambda$ that can be described informally as follows: first sort in increasing order the items according to the number of wins. Then for any pair $(i,j)$ for which item $i$ is ranked $n/2$ positions higher than item $j$, it is very likely that item $i$ is stronger than item $j$ so that it beats item $j$ with probability $\frac 12 + \lambda$. We then average the $\Ber(\frac 12+\lambda)$ variables over all such pairs to obtain an estimator $\hat \lambda$ of $\lambda$.
More formally, we further split the first sample into two subsamples, each containing $N/2$ pairwise comparisons. Denote by $A_{i,j}'$ and $A_{i,j}''$ the number of wins item $i$ has against item $j$ in the first and second subsample, respectively.
The estimator $\hat \lambda$ is given by the following procedure:

\begin{enumerate}
\item For each $i \in [n]$, associate with item $i$ a score $S_i = \sum_{j =1}^n A_{i,j}'$. 

\item Construct a permutation $\tilde \pi$ by sorting the scores $S_i$ in increasing order, i.e., $\tilde \pi$ is chosen so that $\tilde \pi(i) < \tilde \pi(j)$ if $S_{i} \le S_{j}$, with ties broken arbitrarily.

\item Define $\DS \hat \lambda = \frac{2}{N} \binom{n}{2} \binom{n/2}{2}^{-1} \sum_{\tilde \pi(i) - \tilde \pi (j) > \frac n2} A_{i,j}'' - \frac 12. $
\end{enumerate}

Given the estimator $\hat \lambda$, we now describe a multistage procedure to estimate the permutation $\pi^*$. To recover the underlying order of items, it is equivalent to estimate the row sums $\sum_{j=1}^n M_{\pi^*(i), \pi^*(j)}$ which we call scores of the items, because the scores are increasing linearly if the items are placed in order. Initially, for each $i \in [n]$, we estimate the score of item $i$ by the number of wins item $i$ has. If item $i$ has a much higher score than item $j$ in the first stage, then we are confident that item $i$ is stronger than item $j$. Hence in the second stage, we can estimate $M_{\pi^*(i), \pi^*(j)}$ by $\frac 12 + \hat \lambda$, which is very close to the truth. For those pairs that we are not certain about, $M_{\pi^*(i), \pi^*(j)}$ is still estimated by its empirical version. The variance of each score is thus greatly reduced in the second stage, thereby yielding a more accurate order of the items. Then we iterate this process to obtain finer and finer estimates of the scores and the underlying order.

To present the Multistage Sorting (\MS) algorithm formally, let us fix a positive integer $T$ which is the number of stages of the algorithm. We further split the second sample into $T$ subsamples each containing $N/T$ pairwise comparisons\footnote{We assume without loss of generality that $T$ divides $N$ to ease the notation.}. Similar to the data matrix $A$ for the full sample, for $t \in [T]$ we define a matrix $A^{(t)} \in \R^{n \times n}$ by setting $A^{(t)}_{i,j}$ to be the number of wins item $i$ has against item $j$ in the $t$-th sample. The \MS\ algorithm proceeds as follows:

\begin{enumerate}
\item For each $i \in [n]$, define $I^{(0)}(i) = [n]$, $I^{(0)}_-(i) = \varnothing$ and $I^{(0)}_+(i) = \varnothing$.

\item At the $t$-th stage where $t \in [T]$, compute the score $S^{(t)}_i$ of item $i$:
$$ S^{(t)}_i = \frac{Tn(n-1)}{2N} \sum_{j \in I^{(t-1)}(i) } A^{(t)}_{i,j} + \sum_{j \in I^{(t-1)}_-(i)} \big(\frac 12 + \hat \lambda \big) + \sum_{j \in I^{(t-1)}_+(i)} \big(\frac 12 - \hat \lambda \big) \,. $$

\item Let $C_0$ and $C_1$ be sufficiently large universal constants\footnote{Determined according to Lemma~\ref{lem:lambda} and Lemma~\ref{lem:subset-concentrate} respectively.}. If it holds that 
\begin{equation}
\label{EQ:Itlarge}
|I^{(t-1)}(i)| \ge C_1 n^2 \frac{T}{N} \log (nT)\,,
\end{equation}
then we  set the threshold 
$$ \tau_i^{(t)} = (10+2C_0) n \sqrt{|I^{(t-1)}(i)| T N^{-1} \log(nT) } \,,$$
and define the  sets
\begin{align*}
I^{(t)}_-(i) &= \{j \in [n]: S^{(t)}_j - S^{(t)}_i < - \tau_i^{(t)}\}, \\
I^{(t)}_+(i) &= \{j \in [n]: S^{(t)}_j - S^{(t)}_i > \tau_i^{(t)}\}, \text{ and} \\
I^{(t)}(i) &= [n] \setminus \big( I^{(t)}_-(i) \cup I^{(t)}_+(i) \big) \,.
\end{align*} 
If~\eqref{EQ:Itlarge} does not hold, then we define $I^{(t)}(i) = I^{(t-1)}(i)$, $I_-^{(t)}(i) = I_-^{(t-1)}(i)$ and $I_+^{(t)}(i) = I_+^{(t-1)}(i)$. Note that $I^{(t)}(i)$ denotes the set of items $j$ whose ranking relative to $i$ has not been determined by the algorithm at stage $t$.

\item After repeating Step 2 and 3 for $t = 1, \dots, T$, output a permutation $\hat \pi^{\scriptscriptstyle \MS}$ by sorting the scores $S^{(T)}_i$ in increasing order, i.e., $\hat \pi^{\scriptscriptstyle \MS}$ is chosen so that $\hat \pi^{\scriptscriptstyle \MS} (i) < \hat \pi^{\scriptscriptstyle \MS} (j)$ if $S^{(T)}_{i} \le S^{(T)}_{j}$ with ties broken arbitrarily.
\end{enumerate}

It is clear that the time complexity of each stage of the algorithm is $O(n^2)$. Take $T = \lfloor \log \log n \rfloor$ so that the overall time complexity of the \MS\ algorithm is only $O(n^2 \log \log n)$.
Our main result in this section is the following guarantee on the performance of the estimator $\hat \pi^{\scriptscriptstyle \MS}$ given by the \MS\ algorithm.

\begin{theorem} \label{thm:multistage}
Suppose that $N \ge C n \log n$ for a sufficiently large constant $C>0$ and that $M = \Mnss$ where $\lambda \in [c,\frac 12)$ for a constant $c>0$. Then, under the noisy sorting model with sampling model~\ref{model:2}, the following holds. With probability at least $1-n^{-7}$, the \MS\ algorithm with $T=\lfloor \log \log n \rfloor$  stages outputs an estimator $\hat \pi^{\scriptscriptstyle \MS}$ that satisfies
$$ \|\hat \pi^{\scriptscriptstyle \MS} - \pi^*\|_\infty \lesssim \frac{n^2}{N} (\log n)\log \log n$$ 
and
$$
\inver(\hat \pi^{\scriptscriptstyle \MS}, \pi^*) \lesssim \frac{n^3}{N} (\log n)\log \log n \,.
$$
\end{theorem}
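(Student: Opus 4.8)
The plan is to track the algorithm's ``uncertainty sets'' $I^{(t)}(i)$ and show that, on a high‑probability event, (i) they are never corrupted --- no $j$ with $\pi^*(j)<\pi^*(i)$ ever lands in $I^{(t)}_+(i)$, and vice versa --- and (ii) their sizes shrink roughly like a square root at each stage, so that after $T=\lfloor\log\log n\rfloor$ stages the scores $S^{(T)}_i$ pin down each rank to within $O\bigl(\tfrac{n^2}{N}(\log n)\log\log n\bigr)$; both displayed bounds then follow by a ``sorting by noisy scores'' argument together with \eqref{eq:kt-l1}. Two preliminary reductions. Since $\|\hat\pi^{\scriptscriptstyle\MS}-\pi^*\|_\infty\le n$ and $\inver(\hat\pi^{\scriptscriptstyle\MS},\pi^*)\le\binom n2$ deterministically, we may assume $N\ge C'n(\log n)\log\log n$ for a large $C'$, since otherwise $\tfrac{n^2}{N}(\log n)\log\log n\gtrsim n$ and both claims are trivial; under this assumption \eqref{EQ:Itlarge} holds at $t=1$. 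And since $M=\Mnss$, the ``true score'' $\mu_i:=\sum_{j\ne i}[\Mnss]_{\pi^*(i),\pi^*(j)}=\tfrac{n-1}{2}+\lambda\bigl(2\pi^*(i)-n-1\bigr)$ is affine in the rank, so $|\mu_i-\mu_j|=2\lambda|\pi^*(i)-\pi^*(j)|$; conditionally on the first $t-1$ stages and on $\hat\lambda$, the \emph{fresh} subsample $A^{(t)}$ makes $S^{(t)}_i$ an average of independent terms whose conditional mean is $\mu_i+(|I^{(t-1)}_-(i)|-|I^{(t-1)}_+(i)|)(\hat\lambda-\lambda)$ \emph{provided the classification so far is correct}, hence has conditional bias $\le n|\hat\lambda-\lambda|$.

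The heart of the argument is an induction. Let $\cE_t$ be the event that $|S^{(s)}_i-\mu_i|\le\tfrac1{20}\tau_i^{(s)}$ for all $i\in[n]$ and all $s\le t$ (reading $\tau_i^{(s)}$ as its last active value once \eqref{EQ:Itlarge} fails for $i$). A structural point used throughout: at $t=1$ every $|I^{(0)}(i)|=n$, so all $\tau_i^{(1)}$ are equal and hence all $|I^{(1)}(i)|$ agree up to a factor of $2$; one checks that this balance propagates, so at every stage $\max_i|I^{(t)}(i)|\le c^*\min_i|I^{(t)}(i)|$ for a universal $c^*$, and therefore $\tau_j^{(t)}\le\sqrt{c^*}\,\tau_i^{(t)}$ for any $i,j$ active at stage $t$. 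Granting $\cE_{t-1}$: the classification through stage $t-1$ is correct, because $|S^{(t-1)}_j-S^{(t-1)}_i|>\tau_i^{(t-1)}$ together with the balance forces $|\mu_j-\mu_i|>\tfrac12\tau_i^{(t-1)}>0$; so the conditional bias of $S^{(t)}_i$ is at most $n|\hat\lambda-\lambda|$, and since $A^{(t)}$ is independent of the past, Bernstein's inequality (Lemma~\ref{lem:subset-concentrate}) gives $|S^{(t)}_i-\E[S^{(t)}_i\mid\text{past}]|\lesssim n\sqrt{|I^{(t-1)}(i)|\,TN^{-1}\log(nT)}$ with probability $\ge 1-n^{-10}$ for each $i$, with $|I^{(t-1)}(i)|$ treated as fixed. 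Combining with $|\hat\lambda-\lambda|\lesssim\sqrt{\log n/N}$ (Lemma~\ref{lem:lambda}) and using $N\le\binom n2$ to see that $n|\hat\lambda-\lambda|$ is dominated by the \emph{smallest} value $\tau_i^{(t)}$ can take (which is $\asymp n^2TN^{-1}\log(nT)$, at the floor in \eqref{EQ:Itlarge}), with $C_0,C_1$ chosen large enough, we get $\p[\cE_t^{\,c}\mid\cE_{t-1}]\le n^{-9}$. A union bound over $t\in[T]$ and over the $\hat\lambda$-event gives $\p[\cE_T]\ge 1-n^{-7}$.

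On $\cE_T$, set $m_t:=\max_i|I^{(t)}(i)|$ and $\gamma:=\lambda^{-1}n\sqrt{TN^{-1}\log(nT)}$. If $j\in I^{(t)}(i)$ then $|S^{(t)}_j-S^{(t)}_i|\le\tau_i^{(t)}$, so by the concentration bound and the balance of the sets $|\mu_j-\mu_i|\lesssim n\sqrt{m_{t-1}TN^{-1}\log(nT)}$, hence $|\pi^*(j)-\pi^*(i)|\lesssim\gamma\sqrt{m_{t-1}}$; since the ranks in $I^{(t)}(i)$ lie in an interval, $|I^{(t)}(i)|\lesssim\gamma\sqrt{m_{t-1}}+1$ for every $i$, while $|I^{(t)}(i)|=|I^{(t-1)}(i)|$ where \eqref{EQ:Itlarge} fails. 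Thus $m_t\lesssim\max\bigl\{\gamma\sqrt{m_{t-1}},\,C_1n^2TN^{-1}\log(nT)\bigr\}$; iterating from $m_0=n$ gives $m_t\lesssim\gamma^{2(1-2^{-t})}n^{2^{-t}}$, and since $2^{-\lfloor\log\log n\rfloor}\lesssim1/\log n$ we have $n^{2^{-T}}=O(1)$, so $m_T\lesssim\gamma^2\asymp\tfrac{n^2}{N}(\log n)\log\log n$ (using $\lambda\asymp1$, $T\asymp\log\log n$, $\log(nT)\asymp\log n$, and that the floor has the same order).

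Finally, on $\cE_T$ we have $|S^{(T)}_i-\mu_i|\le\delta_T$ for all $i$ with $\delta_T\lesssim n\sqrt{m_{T-1}TN^{-1}\log(nT)}\lesssim\lambda\gamma^2$; hence whenever $|\pi^*(j)-\pi^*(i)|>\delta_T/\lambda$ the scores $S^{(T)}$ order $i$ and $j$ as $\pi^*$ does, so $|\hat\pi^{\scriptscriptstyle\MS}(i)-\pi^*(i)|\le|\{j:|\pi^*(j)-\pi^*(i)|\le\delta_T/\lambda\}|\lesssim\delta_T/\lambda\lesssim\gamma^2\asymp\tfrac{n^2}{N}(\log n)\log\log n$, proving the first bound; the second is then immediate from $\inver(\hat\pi^{\scriptscriptstyle\MS},\pi^*)\le\|\hat\pi^{\scriptscriptstyle\MS}-\pi^*\|_1\le n\|\hat\pi^{\scriptscriptstyle\MS}-\pi^*\|_\infty$ via \eqref{eq:kt-l1}. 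The main obstacle is the joint calibration of the constants $10,C_0,C_1$ (and the bootstrap establishing a universal balance constant $c^*$): each threshold $\tau_i^{(t)}$ must be large enough that no item is ever misclassified --- keeping the $\hat\lambda$-corrected scores nearly unbiased --- yet small enough that $|I^{(t)}(i)|$ contracts by a square root, and the estimation error of $\hat\lambda$ must be controlled against the smallest threshold that occurs over all $T$ stages; the per‑stage use of a fresh subsample is what lets Bernstein's inequality treat the random sets $I^{(t-1)}(i)$ as fixed.
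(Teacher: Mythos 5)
Your proposal follows the same route as the paper: estimate $\hat\lambda$ (Lemma~\ref{lem:lambda}), use the fresh independent subsample at each stage together with a binomial tail bound (Lemma~\ref{lem:subset-concentrate}) to concentrate scores, run an induction asserting that (a) no item has been misclassified into $I_\pm^{(t)}(i)$ and (b) the uncertainty sets contract like a square root until hitting the floor in~\eqref{EQ:Itlarge}, then conclude with $T=\lfloor\log\log n\rfloor$, the sorting argument, and~\eqref{eq:kt-l1}. The only bookkeeping difference is that you center concentration on $\mu_i=s_i^*$ and carry the $\le n|\hat\lambda-\lambda|$ bias separately, while the paper centers on $\hat s_i$; these are equivalent.

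There is, however, one loose step you lean on twice: the asserted ``balance'' $\max_i|I^{(t)}(i)|\le c^*\min_i|I^{(t)}(i)|$ for a universal $c^*$, dispatched with ``one checks that this balance propagates.'' As stated, this is not a consequence of your events $\cE_t$: on $\cE_t$ each $S_k^{(t)}$ is only pinned to an interval of half-width $\tau_k^{(t)}/2$ around $\hat s_k$, and nothing prevents some $|I^{(t)}(i)|$ from collapsing to a handful of indices (even to $\{i\}$) while others stay of order $\sqrt{m_{t-1}}$; the floor in~\eqref{EQ:Itlarge} protects the \emph{active} sets from below but frozen sets can have arbitrary small size. Fortunately your second use of balance (to get $|\mu_j-\mu_i|\lesssim n\sqrt{m_{t-1}TN^{-1}\log(nT)}$ for $j\in I^{(t)}(i)$) does not need it: $|\hat s_j-\hat s_i|\le\tau_i^{(t)}+\tfrac12\tau_i^{(t)}+\tfrac12\tau_j^{(t)}$ together with $\tau_j^{(t)}\lesssim n\sqrt{m_{t-1}TN^{-1}\log(nT)}$ already gives the contraction $m_t\lesssim\gamma\sqrt{m_{t-1}}$ using the \emph{max} alone. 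The first use is the delicate one. From $S_j^{(t)}-S_i^{(t)}<-\tau_i^{(t)}$ and $|S_k^{(t)}-\hat s_k|\le\tau_k^{(t)}/2$ one gets only $\hat s_j-\hat s_i<(\tau_j^{(t)}-\tau_i^{(t)})/2$, so concluding $j<i$ requires the one-sided comparison $\tau_j^{(t)}\le\tau_i^{(t)}$ for the specific pair $(i,j)$, which is not a corollary of a global $c^*$-balance and which you do not establish. To be fair, the paper's own displayed argument for ``$\hat s_j<\hat s_i$ and thus $j<i$'' elides the same comparison, so this is a shared rough edge rather than a defect unique to your write-up; but since you elevate balance to a named structural fact and reuse it, your proof as written has a genuine gap at precisely that point, and it should be either replaced by the one-sided control of $\tau_j^{(t)}$ relative to $\tau_i^{(t)}$ for $j\in I_\pm^{(t)}(i)$ or removed.
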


Note that the second statement follows from the first one together with~\eqref{eq:kt-l1}. Indeed, we have
$$
\inver(\hat \pi^{\scriptscriptstyle \MS}, \pi^*) \le \|\hat \pi^{\scriptscriptstyle \MS} - \pi^*\|_1 \le n \|\hat \pi^{\scriptscriptstyle \MS} - \pi^*\|_\infty \lesssim \frac{n^3}{N} (\log n) \log \log n \,,
$$
which is optimal up to a polylogarithmic factor in the regime where $\lambda$ is bounded away from $0$ according to Theorem~\ref{thm:minimax} (and Theorem~\ref{thm:lower-inv}). Therefore, the \MS\ algorithm achieves significant computational efficiency while sacrificing little in terms of statistical performance. On the downside, it is limited to the noisy sorting model where $M=M_n^*(\lambda)$---this assumption is necessary to exploit strong stochastic transitivity---and our analysis does not account for the dependence in $\lambda$.

Furthermore, although we only consider model \ref{model:2} of sampling with replacement in this section, the \MS\ algorithm can be easily modified to handle model \ref{model:1} of sampling without replacement. It is much more challenging to prove analogous theoretical guarantees in this case, because we cannot split the observations into independent samples. In Section~\ref{sec:sim}, however, we provide empirical evidence showing that the \MS\ estimator has very similar performance for the two sampling models.

Our algorithm bears comparison with the algorithm proposed by \cite{BraMos08}. Their algorithm---which works in the full observation case $N = \binom n2$---achieves the statistically optimal rate in time $O(n^C)$, where $C$ is a large positive constant depending on $\lambda$. Though our algorithm's statistical performance falls short of the optimal rate by a polylogarithmic factor, it runs in time $O(n^2 \log \log n)$ and works in the partial observation setting as long as $N \gtrsim n \log n$. Note by way of comparison that Theorem~\ref{thm:lower-inv} indicates that no procedure achieves nontrivial recovery unless $N \gg n$.

\section{Simulations} \label{sec:sim}

To support our theoretical findings in Section~\ref{sec:ms}, we implement the \MS\ algorithm on synthetic instances generated from the noisy sorting model. For simplicity, we take $\lambda = 0.25$ and set $\hat \lambda = \lambda$ in the algorithm. Theorem~\ref{thm:multistage} predicts a scaling $n^3 N^{-1} (\log n) \log \log n$ of the estimation error in the Kendall tau distance for model \ref{model:2} of sampling with replacement, where $n$ is the number of items and $N$ is the number of pairwise comparisons. This rate is optimal up to a polylogarithmic factor according to Theorem~\ref{thm:lower-inv}.

\begin{figure}[ht]
\centering
\begin{minipage}[c]{.49\linewidth}
\includegraphics[clip, trim=1.4cm 6.3cm 1.8cm 6.7cm, width=\linewidth]{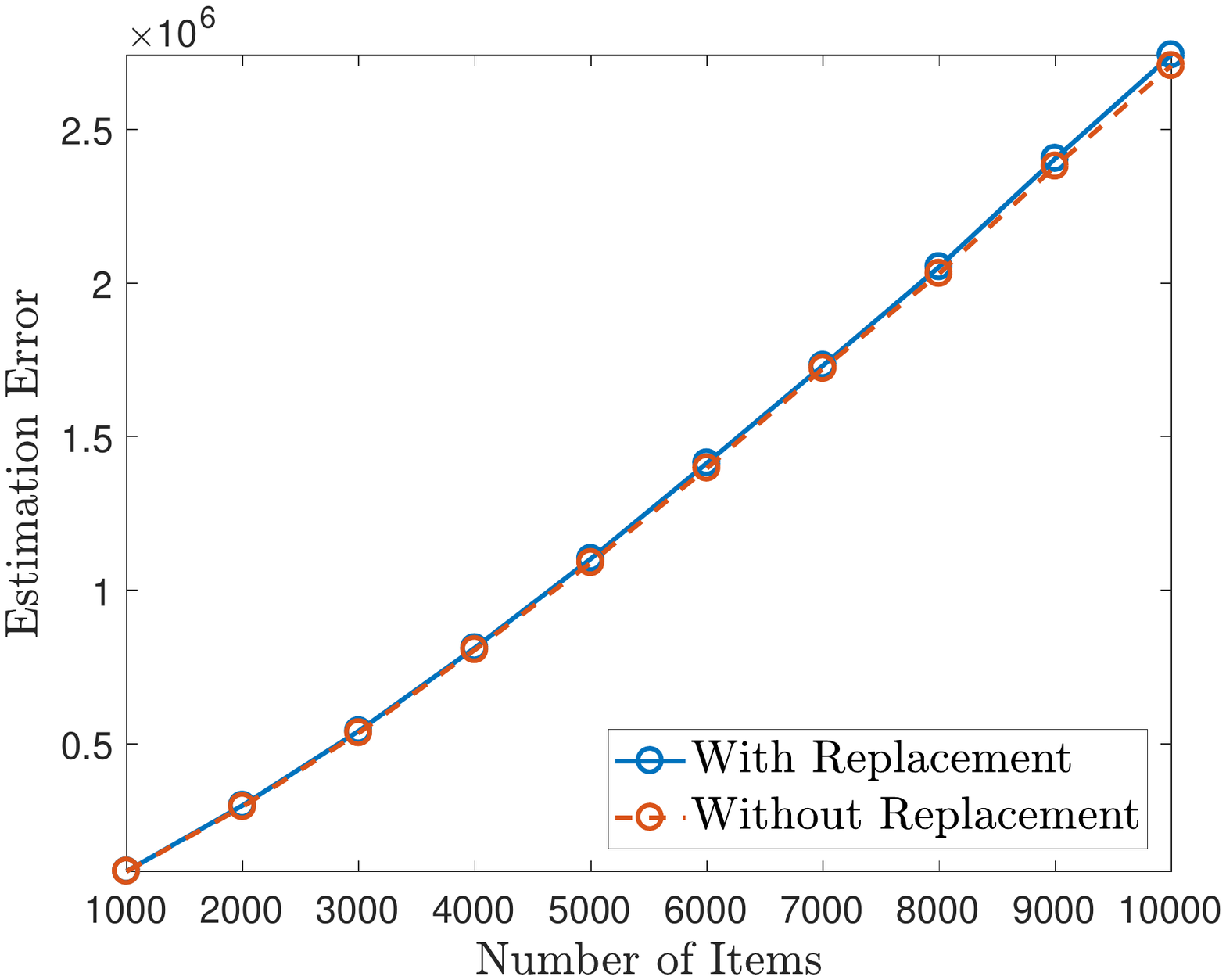}
\end{minipage}
\begin{minipage}[c]{.49\linewidth}
\includegraphics[clip, trim=1.4cm 6.3cm 1.8cm 6.7cm, width=\linewidth]{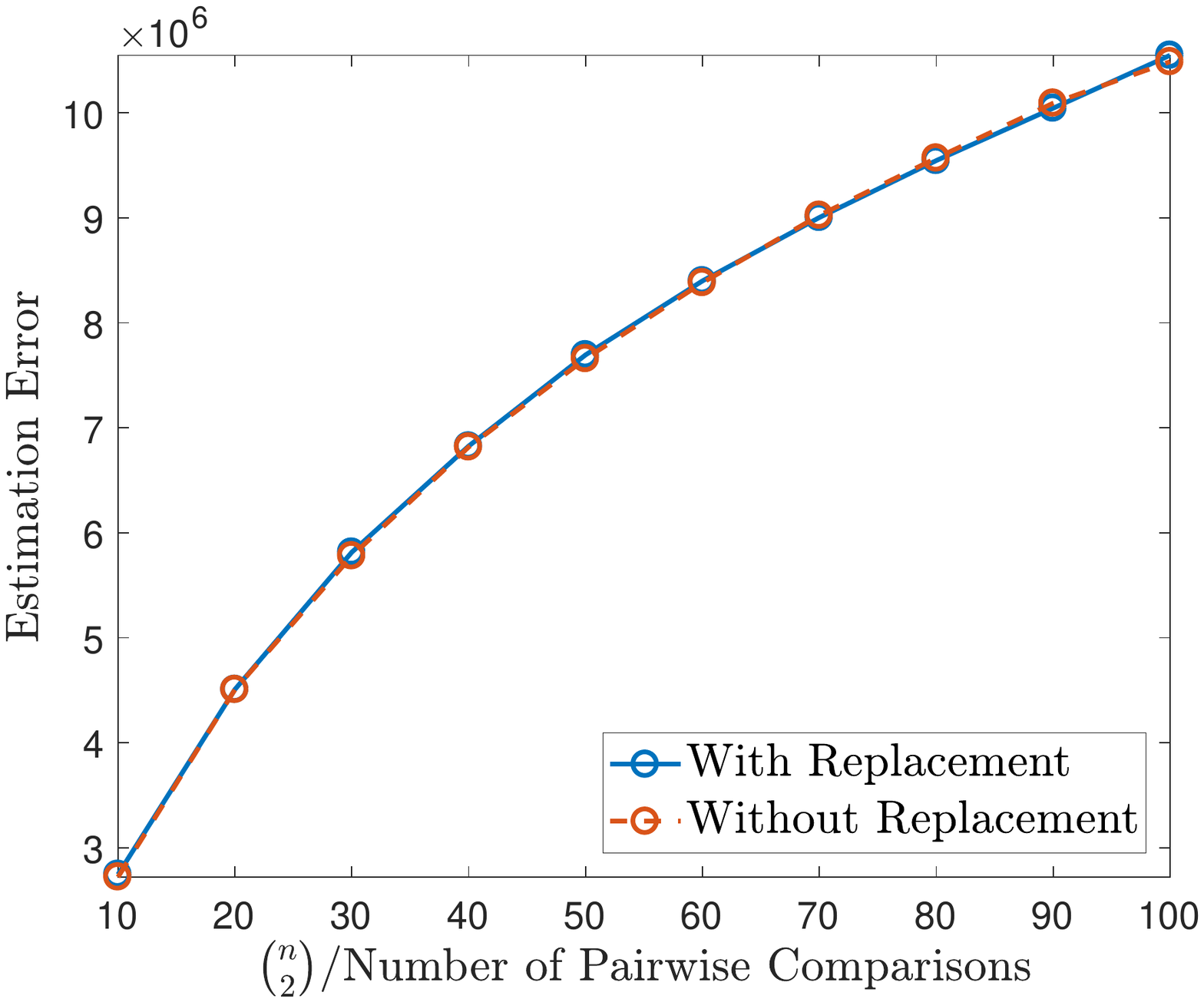}
\end{minipage} 
\caption{Estimation errors $\inver(\hat \pi^{\scriptscriptstyle \MS}, \pi^*)$ for the observations sampled with and without replacement. 
Left: $N=p\binom{n}{2}=0.1\binom{n}{2}$ and $n$ ranging from $1,000$ to $10,000$; Right: $n=10,000$ and $N=p\binom{n}{2}$ ranging from $0.1\binom{n}{2}$ to $0.01\binom{n}{2}$.}
\label{fig:np}
\end{figure}

In Figure~\ref{fig:np}, we plot estimation errors $\inver(\hat \pi^{\scriptscriptstyle \MS}, \pi^*)$  averaged over $10$ instances generated from the model.
In the left plot, we let $n$ range from $1,000$ to $10,000$ and set $N=0.1\binom{n}{2}$. For this choice of $N$, Theorem~\ref{thm:multistage} predicts that $\inver(\hat \pi^{\scriptscriptstyle \MS}, \pi^*) = \tilde O_{\p}(n)$ and we indeed observe a near-linear scaling in that plot. In the right plot, we fix $n=10,000$ and let the proportion of observed entries, $\alpha=N/\binom{n}{2}$ range from $.01$ to $.1$. For this choice of parameters, Theorem~\ref{thm:multistage} predicts that $\inver(\hat \pi^{\scriptscriptstyle \MS}, \pi^*) \le C_n\alpha^{-1}$ (recall that here $n$ is fixed), and we clearly observe a sublinear relation between $\inver(\hat \pi^{\scriptscriptstyle \MS}, \pi^*)$ and $\alpha^{-1}$.
Note that this does not contradict the lower bound since the latter is stated up to constants.

Moreover, the \MS\ algorithm can be easily modified to work for the without replacement model \ref{model:1}. Namely, given the partially observed pairwise comparisons, we assign each comparison to one of the samples $1, \dots, T$ uniformly at random, independent of all the other assignments. After splitting the whole sample into $T$ subsamples, we execute the \MS\ algorithm as in the previous case. In Figure~\ref{fig:np}, we take $p = N/\binom n2$ and plot the estimation errors for sampling without replacement, which closely follow the errors for observations sampled with replacement. Therefore, although it seems difficult to prove analogous guarantees on the performance of the \MS\ algorithm applied to the without replacement model, empirically the algorithm performs very similarly for the two sampling models.

\begin{figure}[ht]
\centering
\begin{minipage}[c]{.32\linewidth}
\includegraphics[clip, trim=4.2cm 9.1cm 4.3cm 9.2cm, width=\linewidth]{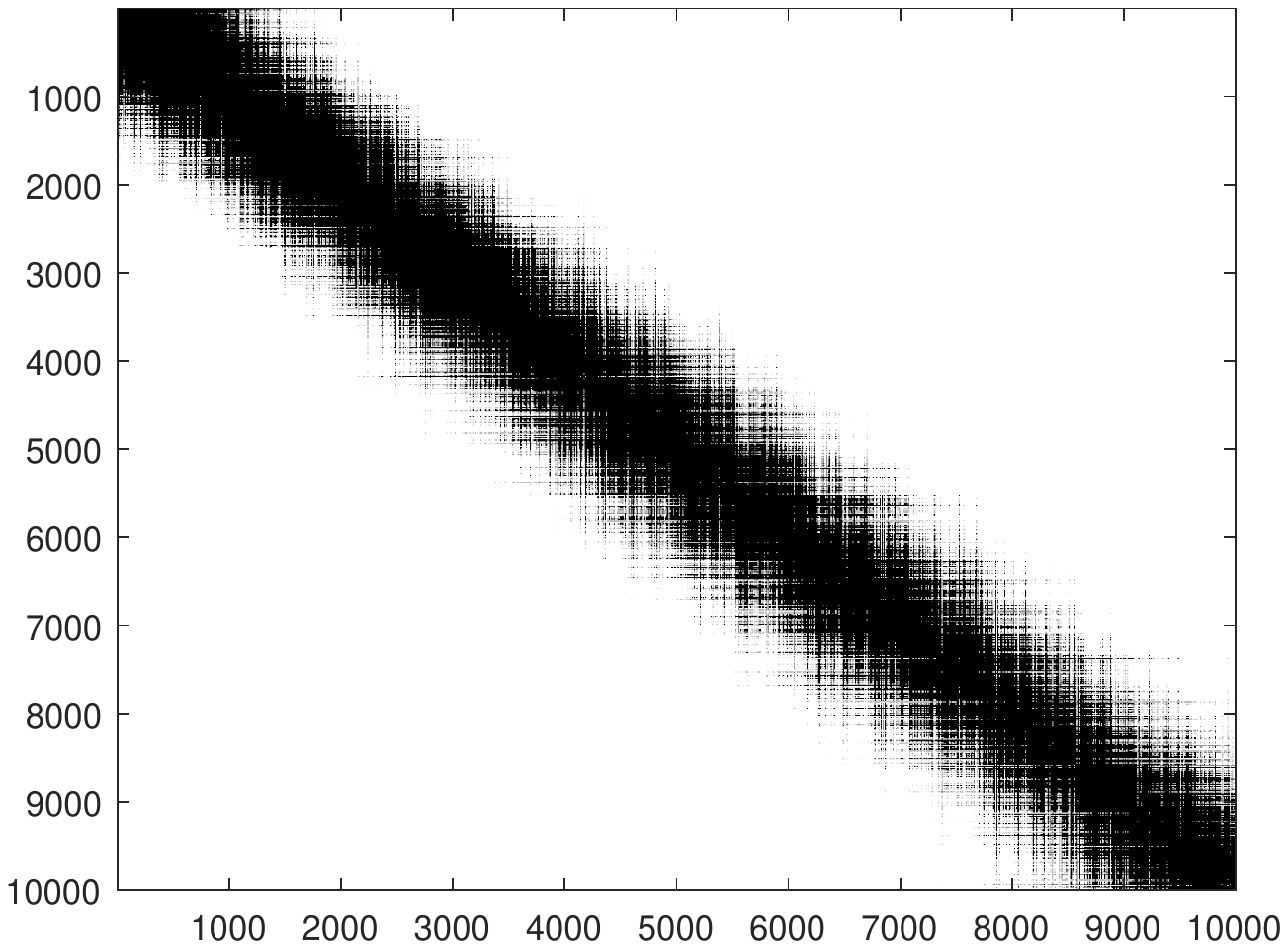}
\centerline{\scriptsize \sf Stage 1}
\end{minipage}
\begin{minipage}[c]{.32\linewidth}
\includegraphics[clip, trim=4.2cm 9.1cm 4.3cm 9.2cm, width=\linewidth]{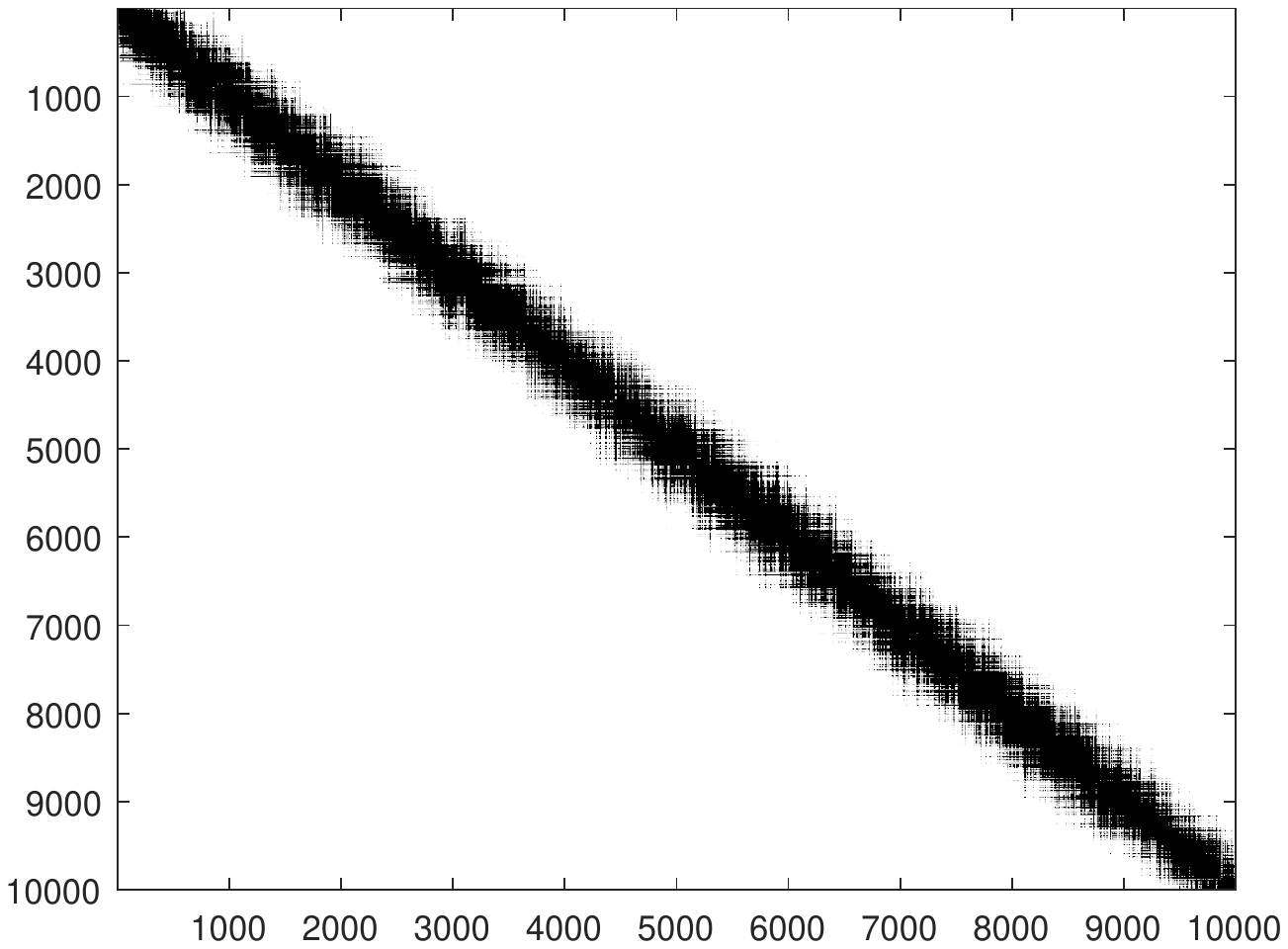}
\centerline{\scriptsize \sf Stage 2}
\end{minipage} 
\begin{minipage}[c]{.32\linewidth}
\includegraphics[clip, trim=4.2cm 9.1cm 4.3cm 9.2cm, width=\linewidth]{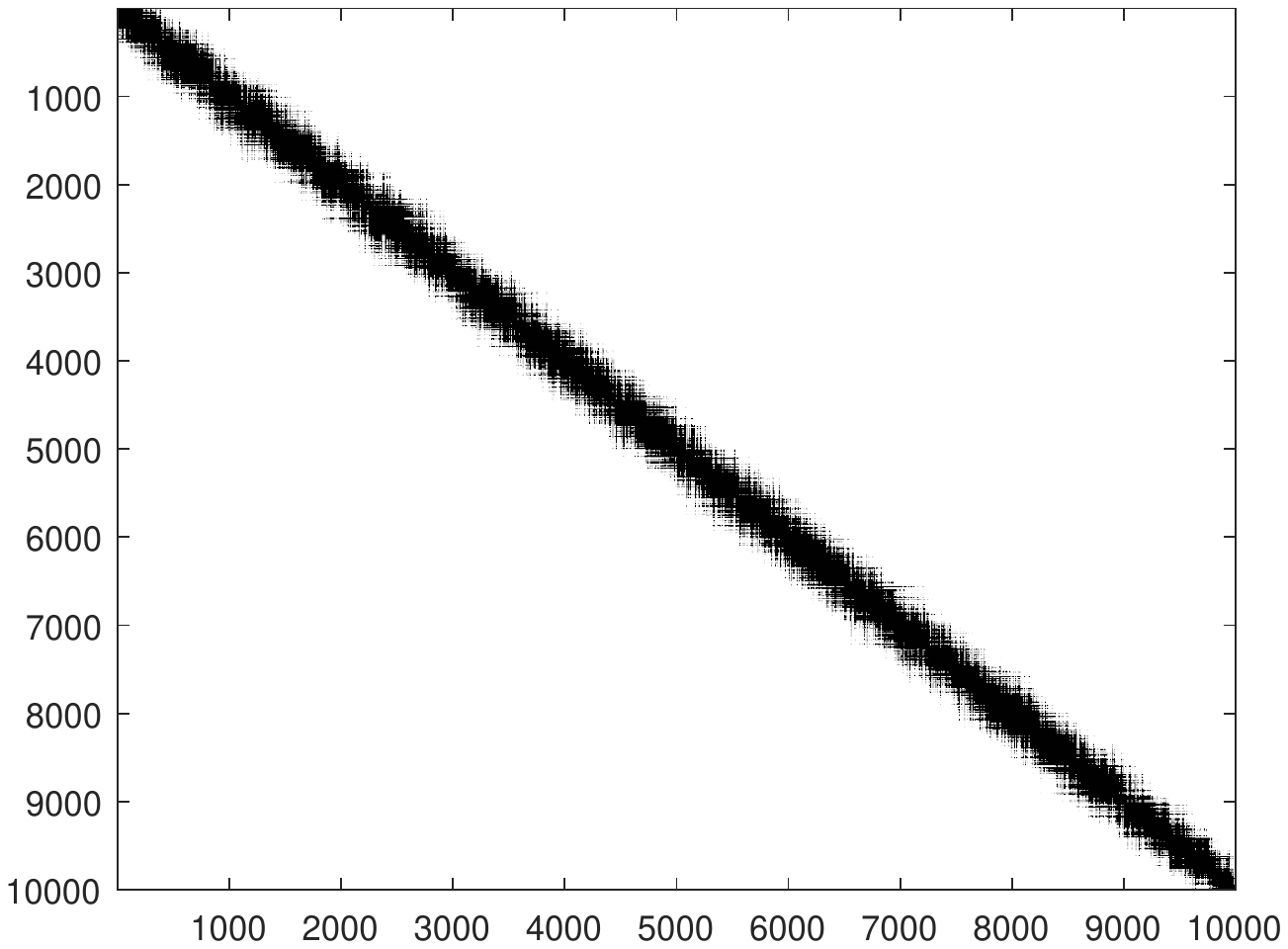}
\centerline{\scriptsize \sf Stage 3}
\end{minipage} 
\caption{The uncertainty regions $\mathcal{R}^{(t)}$ at stages $t=1,2,3$ of the \MS\ algorithm. The two axes represent the indices of the items. A black pixel at $(i,j)$ indicates that $(i,j) \in \mathcal{R}^{(t)}$, i.e., the algorithm is not certain about the relative order of item $i$ and item $j$ at stage $t$. A white pixel indicates the opposite.}
\label{fig:steps}
\end{figure}

To gain further intuition about the \MS\ algorithm, we consider the set $I^{(t)}(i)$ defined in the algorithm. At stage $t$ of the algorithm, the set $I^{(t)}(i)$ consists of all indices $j$ for which we are not certain about the relative order of item $i$ and item $j$. The proof of Theorem~\ref{thm:multistage} essentially shows that the uncertainty set $I^{(t)}(i)$ is shrinking as the algorithm proceeds. To verify this intuition, in Figure~\ref{fig:steps} we plot the \emph{uncertainty regions}  $$\mathcal{R}^{(t)} \defn \big\{ (i,j) \in [n]^2: i \in [n],\, j \in I^{(t)}(i) \big\}$$ 
at stages $t = 1,2,3$ of the \MS\ algorithm, for $n=10,000$ and $N=\binom{n}{2}$. The items are ordered according to $\pi^* = \id$ for visibility of the region. As exhibited in the plots, the uncertainty region is indeed shrinking as the algorithm proceeds.

\section{The symmetric group and inversions} \label{sec:inver}

Before proving the main results for the noisy sorting model, we study the metric entropy of the symmetric group $\mathfrak S_n$ with respect to the Kendall tau distance. Counting permutations subject to constraints in terms of the Kendall tau distance is of theoretical importance and has interesting applications, e.g., in coding theory \citep[see, e.g,][]{BarMaz10,MazBarZem13}. We present the results in terms of metric entropy, which easily applies to the noisy sorting problem and may find further applications in statistical problems involving permutations.

For $\eps > 0$ and $S \subseteq \mathfrak S_n$, let $N(S, \eps)$ and $D(S, \eps)$ denote respectively the $\eps$-covering number and the $\eps$-packing number of $S$ with respect to the Kendall tau distance.
The following main result of this section provides bounds on the metric entropy of balls in $\mathfrak S_n$.

\begin{proposition} \label{prop:ball-cover-pack}
Consider the ball $\cB(\pi, r) = \{\sigma \in \mathfrak S_n: \inver(\pi, \sigma) \le r\}$
centered at $\pi \in \mathfrak S_n$ with radius $r \in (0, \binom n2]$. 
We have that for $\eps \in (0,r)$,
\begin{align*}
n \log \big( \frac{r}{n+\eps} \big) - 2n &\le \log N (\cB(\pi, r), \eps ) \le \log D (\cB(\pi, r),  \eps ) \le  n \log \big( \frac{ 2n + 2r}{\eps} \big) + 2n \,.
\end{align*}
\end{proposition}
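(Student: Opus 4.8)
The strategy is to reduce the problem of estimating the metric entropy of a Kendall tau ball to a volume (cardinality) computation, and then to estimate the cardinalities $|\cB(\pi,r)|$ themselves. Since the Kendall tau distance is left-invariant (it depends only on $\sigma\pi^{-1}$, or rather on the relative order), I may assume $\pi=\id$ without loss of generality, so that $\cB(\id,r)$ is the set of permutations with at most $r$ inversions. The standard covering/packing sandwich gives, for any $S$ and any $\eps$, that $N(S,\eps)\le D(S,\eps)$ and moreover via a volume argument $D(S,\eps)\le \tfrac{|S^{+\eps/2}|}{\min_\sigma |\cB(\sigma,\eps/2)|}$ and $N(S,\eps)\ge \tfrac{|S|}{\max_\sigma|\cB(\sigma,\eps)|}$. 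Here $S=\cB(\id,r)$; I will use that an $\eps/2$-neighborhood of $\cB(\id,r)$ is contained in $\cB(\id,r+\eps/2)$ and, since $\eps<r$, in $\cB(\id,2r)$. Thus the whole argument comes down to two-sided bounds on $|\cB(\id,r)|$ for $r\in(0,\binom n2]$.

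\textbf{Counting permutations with bounded inversions.} The key combinatorial input is an estimate of the form
\[ \Big(\frac{r}{n}\Big)^{n}e^{-cn} \;\lesssim\; |\cB(\id,r)| \;\lesssim\; \Big(\frac{n+r}{n}\Big)^{n} e^{cn}\,, \]
or something close to it, valid in the relevant range of $r$. To get the upper bound I would use the Lehmer code / inversion-table representation: a permutation $\sigma$ corresponds bijectively to a vector $(c_1,\dots,c_n)$ with $0\le c_i\le n-i$, and $\inver(\id,\sigma)=\sum_i c_i$. Hence $|\cB(\id,r)|$ is exactly the number of lattice points in the box $\prod_i\{0,\dots,n-i\}$ with coordinate sum at most $r$, which is at most the number of nonnegative integer solutions of $\sum_i c_i\le r$, namely $\binom{n+r}{n}\le \big(\tfrac{e(n+r)}{n}\big)^{n}$; taking logarithms yields the claimed upper bound $n\log\tfrac{2n+2r}{\eps}+2n$ after inserting the volume ratio (the $\eps$ in the denominator comes from dividing by $\min|\cB(\cdot,\eps/2)|\ge$ a constant — in fact I will need the crude bound $|\cB(\sigma,\eps/2)|\ge 1$, or a slightly better one, and track constants carefully). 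For the lower bound on $|\cB(\id,r)|$ I would exhibit many permutations in the ball: partition $[n]$ into blocks and permute within blocks, or simply count inversion-table vectors with small entries — e.g. all $(c_1,\dots,c_n)$ with each $c_i\le \lfloor r/n\rfloor$ (and $c_i\le n-i$ automatically if $r/n\lesssim n$), giving at least $(\lfloor r/n\rfloor)^{\,n}$ or so, hence $\log|\cB(\id,r)|\gtrsim n\log(r/n)-cn$. Combined with $N(\cB(\id,r),\eps)\ge |\cB(\id,r)|/\max_\sigma|\cB(\sigma,\eps)| \ge |\cB(\id,r)|/|\cB(\id,2\eps)|$ (using $\cB(\sigma,\eps)\subseteq$ a translate of $\cB(\id,2\eps)$ is not quite right — I would instead use left-invariance so every $\eps$-ball has the same cardinality, hence $\max_\sigma|\cB(\sigma,\eps)|=|\cB(\id,\eps)|$, then bound $|\cB(\id,\eps)|\le\big(\tfrac{e(n+\eps)}{n}\big)^n$ by the same Lehmer argument), this produces $\log N\ge n\log(r/n)-cn - n\log\tfrac{e(n+\eps)}{n} = n\log\tfrac{r}{n+\eps}-c'n$, matching the stated lower bound with the constant $2n$.

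\textbf{Main obstacle.} The combinatorial counting itself is routine via the Lehmer code; the delicate part is getting the constants to come out exactly as $-2n$ and $+2n$ in the stated bounds, and in particular handling the full range $r\le\binom n2$ where the box constraints $c_i\le n-i$ are genuinely binding (so $\binom{n+r}{n}$ overcounts badly when $r$ is large, but that is fine for an upper bound; for the lower bound when $r\asymp n^2$ one should instead note $|\cB(\id,r)|$ is a constant fraction of $n!$, and $\log n! \asymp n\log n \asymp n\log(r/n)$, so the bound still holds). I also need to be slightly careful that $\eps\in(0,r)$ is used to simplify $r+\eps/2\le 2r$ and to ensure $\eps$-balls are nontrivial. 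I expect the bookkeeping of the additive $O(n)$ terms — combining $\log\binom{n+r}{n}$, Stirling, and the volume-ratio denominators — to be the only place requiring real care; everything else is a direct application of left-invariance of $\inver$ plus the standard covering-packing-volume inequalities.
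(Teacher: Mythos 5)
Your proposal follows essentially the same route as the paper: right-invariance of $\inver$ reduces everything to $\pi=\id$, the inversion-table (Lehmer code) bijection gives the two-sided bound on $|\cB(\id,k)|$ (lower bound by counting tables with all $b_i\le\lfloor k/n\rfloor$, upper bound by stars-and-bars $\binom{n+k}{n}$), and then the standard volume sandwich converts cardinality bounds into covering/packing bounds. One small correction: in your parenthetical you suggest that for the packing upper bound it might suffice to lower-bound the volume of small balls by $|\cB(\sigma,\eps/2)|\ge 1$ — that cannot work, since the $\eps$ in the denominator of $n\log\frac{2n+2r}{\eps}$ has to come from a nontrivial lower bound of the form $\log|\cB(\sigma,\eps/2)|\ge n\log\big(\eps/(2n)\big)-n$, which is exactly the other half of the Lehmer-code estimate you already develop; with that substitution the bookkeeping closes and the additive $\pm 2n$ terms come out as stated.
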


We now discuss some high-level implications of Proposition~\ref{prop:ball-cover-pack}. Note that if $n \lesssim \eps < r \le \binom n2$, the lemma states that the $\eps$-metric entropy of a ball of radius $r$ in the Kendall tau distance scales as $n \log \frac{r}{\eps}$. In other words, the symmetric group $\mathfrak S_n$ equipped with the Kendall tau metric is a doubling space with doubling dimension $\Theta(n)$. 
One of the main messages of the current work is that although $\log |\mathfrak S_n| = \log (n!) \asymp n \log n$,  the intrinsic dimension of $\mathfrak S_n$ is $\Theta(n)$, which explains the absence of logarithmic factor in the minimax rate.

To start the proof, we first recall a useful tool for counting permutations, the \emph{inversion table}. Formally, the inversion table $b_1, \dots, b_n$ of a permutation $\pi \in \mathfrak S_n$ is defined by
\[ b_i = \sum_{j: i<j} \1 \big( \pi(i) > \pi(j) \big) \]
for $i \in [n]$. Clearly, we have that $b_i \in \{0,1, \dots, n-i\}$ and $\inver(\pi, \id) = \sum_{i=1}^n b_i$. It is easy to reconstruct a unique permutation using an inversion table with $b_i \in \{0,1, \dots, n-i\}, i \in [n]$, so the set of inversion tables is bijective to $\mathfrak S_n$ via this relation; see, e.g., \cite{Mah00}. 
We use this bijection to bound the number of permutations that differ from the identity by at most $k$ inversions. The following lemma appears in a different form in~\citet{BarMaz10}. We provide a simple proof here for completeness.

\begin{lemma} \label{lem:inversions}
For $0 \le k \le \binom{n}{2}$, we have that
\[ n \log (k/n) - n \le \log \big| \{ \pi \in  \mathfrak S_n: \inver(\pi, \id) \le k \} \big| \le n \log (1+k/n) + n \,. \]
\end{lemma}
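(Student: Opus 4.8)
The plan is to work through the inversion-table bijection recalled just before the statement. Writing $T(n,k) \defn |\{\pi \in \mathfrak S_n : \inver(\pi,\id) \le k\}|$, this bijection identifies $T(n,k)$ with the number of integer sequences $(b_1,\dots,b_n)$ satisfying $0 \le b_i \le n-i$ for every $i$ and $\sum_{i=1}^n b_i \le k$. Both bounds will then follow from elementary counting of such sequences.

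For the upper bound I would simply discard the constraints $b_i \le n-i$ and bound $T(n,k)$ by the number of nonnegative integer sequences with $\sum_{i=1}^n b_i \le k$. Introducing a slack coordinate $b_{n+1} \ge 0$ with $\sum_{i=1}^{n+1} b_i = k$, a stars-and-bars count gives exactly $\binom{n+k}{n}$ such sequences, and the standard estimate $\binom{m}{\ell} \le (em/\ell)^\ell$ with $m = n+k$, $\ell = n$ yields $T(n,k) \le \big(e(1+k/n)\big)^n$, hence $\log T(n,k) \le n\log(1 + k/n) + n$.

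For the lower bound the delicate point is that the constraints $b_i \le n-i$ are genuinely binding for indices $i$ close to $n$, so one cannot let all $b_i$ range over $\{0,\dots,\lfloor k/n\rfloor\}$. If $k < n$, the claimed quantity $n\log(k/n) - n$ is at most $-n < 0 \le \log T(n,k)$ and there is nothing to prove, so I may assume $k \ge n$ and set $q \defn \lfloor k/n \rfloor \ge 1$; since $k \le \binom n2$ we also have $q \le k/n \le (n-1)/2 \le n-1$. I would then count only the admissible sequences with $0 \le b_i \le \min(q, n-i)$: for these, $\sum_{i=1}^n b_i \le \sum_{i=1}^n \min(q,n-i) \le nq \le k$, so the remaining constraint is automatic. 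There are $\prod_{i=1}^n (\min(q,n-i)+1) = (q+1)^{n-q}\,q!$ of them, whence $\log T(n,k) \ge (n-q)\log(q+1) + \log(q!)$. Using $\log(q!) \ge q\log q - q$, the inequality $q\log(1+1/q) \le 1$, and $q \le n-1$, one checks that this is at least $n\log(q+1) - n$, which in turn is at least $n\log(k/n) - n$ because $q+1 > k/n$.

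The only real obstacle is the bookkeeping in the lower bound: choosing the truncation level $q = \lfloor k/n\rfloor$ so that the constraint $\sum_i b_i \le k$ becomes automatic while still leaving roughly $(k/n)^n$ sequences, and then verifying that the combined loss from the binding constraints near $i=n$ (the factor $q!$ rather than $(q+1)^q$) and from Stirling's inequality costs at most the additive $n$ (equivalently the multiplicative $e^n$) that the statement permits. Everything else is a routine application of the inversion-table bijection together with the bound $\binom{m}{\ell} \le (em/\ell)^\ell$ and Stirling's estimate $q! \ge (q/e)^q$.
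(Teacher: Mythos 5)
Your proposal is correct and follows essentially the same route as the paper: the upper bound via stars and bars combined with $\binom{n+k}{n}\le (e(1+k/n))^n$, and the lower bound by restricting each $b_i$ to $\{0,\dots,\min(\lfloor k/n\rfloor,\, n-i)\}$ to get $(q+1)^{n-q}\,q!$ admissible inversion tables, then applying Stirling. The paper carries out the Stirling bookkeeping slightly differently (writing $(k/n)^{n-k/n}\lfloor k/n\rfloor!$ and expanding directly), whereas you dispose of the $k<n$ case trivially and then check the inequality cleanly using $q\log(1+1/q)\le 1$; both are routine and equivalent.
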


\begin{proof}
According to the discussion above, the cardinality $\big| \{ \pi \in  \mathfrak S_n: \inver(\pi, \id) \le k \} \big|$, which we denote by $L$, is equal to the number of inversion tables $b_1, \dots, b_n$ where $b_i \in \{0,1, \dots, n-i\}$ such that $\sum_{i=1}^n b_i \le k$.
On the one hand, 
if $b_i \le \lfloor k/n \rfloor$ for all $i \in [n]$, then $\sum_{i=1}^n b_i \le k$, so a lower bound on $L$ is given by
\begin{align*} 
L &\ge\prod_{i=1}^{n} (\lfloor k/n \rfloor + 1)\land (n-i+1) \\
&\ge \prod_{i=1}^{n-\lfloor k/n \rfloor} (\lfloor k/n \rfloor + 1) \prod_{i=n-\lfloor k/n \rfloor +1}^n (n-i+1) \\
&\ge (k/n)^{n-k/n} \lfloor k/n \rfloor ! \,. 
\end{align*}
Using Stirling's approximation, we see that
\begin{align*} 
\log L &\ge n \log (k/n) - (k/n) \log (k/n) + \lfloor k/n \rfloor \log \lfloor k/n \rfloor - \lfloor k/n \rfloor \\
&\ge n \log (k/n) - n \,. 
\end{align*}

On the other hand, if $b_i$ is only required to be a nonnegative integer for each $i \in [n]$, then we can use a standard ``stars and bars" counting argument \citep{Fel68} to get an upper bound of the form
\[ L \le \binom{n+k}{n} \le 
e^n (1+k/n)^n \,. \]
Taking the logarithm finishes the proof.
\end{proof}


We are ready to prove Proposition~\ref{prop:ball-cover-pack}.

\begin{proof}[of Proposition~\ref{prop:ball-cover-pack}]
The relation between the covering and the packing number is standard.

We employ a standard volume argument to control these numbers.
Let $\cP$ be a $2\eps$-packing of $\cB(\pi, r)$ so that the balls $\cB(\sigma, \eps)$ are disjoint for $\sigma \in \cP$.
Moreover, by the triangle inequality, $\cB(\sigma, \eps) \subseteq \cB(\pi, r + \eps)$ for each $\sigma \in \cP$.
By the invariance of the Kendall tau distance under composition, Lemma~\ref{lem:inversions} yields
\begin{align*}
\log D(\cB(\pi, r), 2\eps )& \leq n \log(1+r/n) + n - n \log(\eps/n) + n \\
& = n \log\big(\frac{n+r}{\eps}\big) + 2n\,.
\end{align*}

On the other hand, if $\cN$ is an $\eps$-net of $\cB(\pi, r)$, then the set of balls $\{\cB(\sigma, \eps)\}_{\sigma \in \cN}$ covers $\cB(\pi, r)$.
By Lemma~\ref{lem:inversions}, we obtain
\begin{align*}
\log N(\cB(\pi, r), \eps) & \geq \log |\cB(\pi, r)| - \log |\cB(\sigma, \eps)| \\
& \geq n \log(r/n) - n - n \log(1+\eps/n) - n \\
& = n \log \big(\frac{r}{n+\eps}\big) - 2n\,,
\end{align*}
as claimed.
\end{proof}

The lower bound on the packing number in Proposition~\ref{prop:ball-cover-pack} becomes vacuous when $r$ and $\eps$ are smaller than $n$, so we complement it with the following result, which is useful for proving minimax lower bounds.

\begin{lemma} \label{lem:pack-special}
Consider the ball $\cB(\pi, r)$
where $r <n/2$. 
We have that
\[
\log N (\cB(\pi, r),  r/4 ) \ge \frac r5 \log \frac nr \,.
\]
\end{lemma}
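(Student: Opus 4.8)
The plan is to build a large packing of $\cB(\pi,r)$ by hand using permutations supported on a small window, then convert it into a covering-number lower bound via the standard inequality $N(S,\eps)\ge D(S,2\eps)$ together with a volume-ratio argument inside the window. By the translation invariance of the Kendall tau distance under composition, we may assume $\pi=\id$.

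First I would restrict attention to permutations that act as the identity outside the block $\{1,\dots,m\}$ for a suitable $m\asymp r$ (concretely something like $m=\lceil 2r\rceil$, still $<n$ since $r<n/2$), and that permute the first $m$ coordinates among themselves. For such $\sigma$, every inversion involves two indices in $[m]$, so $\inver(\sigma,\id)\le\binom m2$; choosing $m$ a small constant times $r$ does not immediately force $\inver(\sigma,\id)\le r$, so instead I would look at the sub-collection of $\sigma\in\mathfrak S_m$ with $\inver(\sigma,\id)\le r$, viewed inside $\mathfrak S_n$. This set is exactly $\cB(\id,r)\cap\{\text{supported on }[m]\}\subseteq\cB(\id,r)$, and by Lemma~\ref{lem:inversions} applied with $n\mapsto m$ and $k\mapsto r$ it has cardinality at least $\exp\big(m\log(r/m)-m\big)$. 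Now I want an $r/4$-packing of this set; by a volume argument, $D\big(\cB(\id,r),r/4\big)$ is at least the cardinality of the set divided by $\max_\sigma|\cB(\sigma,r/4)|$, and since all these $\sigma$ are supported on $[m]$, a ball of radius $r/4$ around such a $\sigma$ meeting the set is contained in the permutations supported on $[m]$ within Kendall tau distance $r/4+\binom m2$ of $\id$...

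— and that is precisely the step I expect to be the main obstacle: controlling the ``local volume'' $|\cB(\sigma,r/4)|$ uniformly when $r<n$, because Proposition~\ref{prop:ball-cover-pack}'s upper bound $n\log\frac{2n+2r}{\eps}+2n$ is useless here (it is $\gtrsim n$, far larger than the target $\frac r5\log\frac nr$). The fix is to stay inside the window: a permutation within Kendall tau distance $r/4$ of $\sigma$, where $\sigma$ moves only the first $m$ coordinates by a bounded amount, must itself keep all coordinates within $O(r)$ of their original positions (since $\|\cdot\|_1\le 2\inver$ and $\sigma$ already displaces coordinates in $[m]$ by at most $m$), hence is supported on an interval of length $O(r)\ll n$. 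Counting permutations of an interval of length $\asymp r$ with at most $\asymp r$ inversions, again via Lemma~\ref{lem:inversions} (with $n\mapsto O(r)$, $k\mapsto O(r)$), gives $|\cB(\sigma,r/4)\cap(\text{window})|\le\exp(O(r))$. Dividing, $\log D\big(\cB(\id,r),r/4\big)\ge m\log(r/m)-m-O(r)=\Theta\!\big(r\log(r/m)\big)+O(r)$, which with $m=\lceil 2r\rceil$ degenerates; so I would instead take $m\asymp n$ truncated appropriately — wait, $m$ cannot exceed $n$ and we need $\inver\le r$. The right scaling is $m$ chosen so that typical $\sigma\in\mathfrak S_m$ with $\inver\le r$ are "spread out": take each $\sigma$ to be a product of $\asymp r/\ell$ disjoint blocks of size $\ell$, fully reversed, so $\inver(\sigma,\id)=\binom\ell2\cdot(r/\ell^2\cdot\ell)\asymp r$ when $\ell$ is constant — no: reversing a block of size $\ell$ costs $\binom\ell2$ inversions, and we can afford $\asymp r/\binom\ell2$ such disjoint blocks, occupying $\asymp \ell r/\binom\ell2\asymp r/\ell$ coordinates, which fits in $[n]$ since $r<n$. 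The number of ways to choose which disjoint length-$\ell$ blocks to reverse, among $n$ positions, is $\binom{n/\ell}{r/\binom\ell2}$ roughly, giving $\log D\gtrsim \frac r{\ell^2}\log\frac{n\ell}{r}$; optimizing over the constant $\ell$ and tracking constants carefully to land on the clean bound $\frac r5\log\frac nr$ is the delicate bookkeeping.

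In summary the skeleton is: (1) reduce to $\pi=\id$ by invariance; (2) exhibit an explicit family of ``block-reversal'' permutations in $\cB(\id,r)$, indexed by a choice of $\Theta(r)$ disjoint short blocks among $\{1,\dots,n\}$, whose pairwise Kendall tau distances exceed $r/2$ (two such permutations differing in which blocks they reverse differ by at least one full block reversal in a region where the other is the identity, costing $\ge\binom\ell2>r/2$ once constants are tuned, hence form an $(r/2)$-packing, hence an $(2\cdot r/4)$-packing); (3) invoke $N(\cB(\id,r),r/4)\ge D(\cB(\id,r),r/4)\ge D(\cB(\id,r),r/2)$ — or more simply count the packing directly — and lower-bound its size by $\binom{\lfloor n/\ell\rfloor}{\lfloor 2r/(\ell(\ell-1))\rfloor}$, then apply the bound $\binom ab\ge(a/b)^b$ and choose $\ell$ a fixed small integer to extract $\frac r5\log\frac nr$. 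The main obstacle, as noted, is arranging the combinatorial constants so that the block reversals are simultaneously cheap enough to all lie in $\cB(\id,r)$, far enough apart to be an $(r/4)$-packing, and numerous enough to yield the stated constant $1/5$; everything else is a direct application of Lemma~\ref{lem:inversions} and the elementary estimate for binomial coefficients.
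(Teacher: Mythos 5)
Your construction is in the same spirit as the paper's (the paper also uses block reversals, with the smallest possible block size $\ell=2$, i.e.\ disjoint adjacent transpositions), but there is a genuine gap at the separation step, and it is precisely the step you flagged as ``delicate bookkeeping.'' You claim that two block-reversal permutations that ``differ in which blocks they reverse'' are at Kendall tau distance at least $\binom{\ell}{2}>r/2$. That is only true if $\ell\gtrsim\sqrt r$, but then you can afford only $O(1)$ reversed blocks within the budget $\inver\le r$, and the number of such permutations is $\binom{\Theta(n/\sqrt r)}{O(1)}$, whose logarithm is $O(\log n)$ --- far short of $\frac r5\log\frac nr$ when $r$ is large. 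Conversely, if $\ell$ is a constant so that you can afford $\Theta(r)$ blocks and the count $\binom{n/\ell}{\Theta(r)}$ has the right order of magnitude, then two permutations whose block-sets differ in exactly one position are only $\binom{\ell}{2}=O(1)$ apart, so the whole family is \emph{not} an $r/4$-packing. Naive counting of all sparse block-sets cannot work; you must thin the family so that any two index vectors differ in $\Omega(r)$ coordinates.

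The missing idea is exactly a sparse Varshamov--Gilbert bound. The paper takes $\ell=2$, encodes each permutation by an $r$-sparse indicator vector in $\{0,1\}^{n/2}$ recording which adjacent pairs $(2i-1,2i)$ are swapped, and then invokes the sparse Varshamov--Gilbert lemma (Lemma~4.10 of Massart) to extract a sub-family $\cS$ with $\log|\cS|\ge\frac r5\log\frac nr$ in which \emph{every} pair of vectors has Hamming distance at least $r/2$. Because the blocks are disjoint, Hamming distance $k$ between indicator vectors translates into Kendall tau distance exactly $k$ between the corresponding permutations, yielding an $r/2$-packing inside $\cB(\id,r)$, after which $N(\cB(\id,r),r/4)\ge D(\cB(\id,r),r/2)$ finishes the proof. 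Your outline has all the surrounding machinery (reduction to $\pi=\id$, block reversals staying inside $\cB(\id,r)$, conversion from packing to covering) but, without the Varshamov--Gilbert thinning, the central packing claim does not hold for any choice of $\ell$.
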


\begin{proof}
Without loss of generality, we may assume that $\pi = \id$ and $n$ is even.
The sparse Varshamov-Gilbert bound \citep[][Lemma~4.10]{Mas07} states that there exists a set $\cS$ of $r$-sparse vectors in $\{0,1\}^{n/2}$, such that $\log |\cS| \ge \frac r5 \log \frac n{r}$ and any two distinct vectors in $\cS$ are separated by at least $r/2$ in the Hamming distance.
We now map every $v \in \cS$ to a permutation $\pi \in \cB(\id, r)$ by defining
\begin{enumerate}
\item $\pi(2i-1) = 2i-1$ and $\pi(2i) = 2i$ if $v(i) = 0$, and
\item $\pi(2i-1) = 2i$ and $\pi(2i) = 2i-1$ if $v(i) = 1$,
\end{enumerate}
for $i \in [n]$. Note that $\pi \in \cB(\id,r)$ because $\pi$ swaps at most $r$ adjacent pairs. Denote by $\cP$ the image of $\cS$ under this mapping. Since the Hamming distance between any two distinct vectors in $\cS$ is lower bounded by $r/2$, we see that $\inver(\pi, \sigma) \ge r/2$ for any distinct $\pi, \sigma \in \cP$. Thus $\cP$ is an $r/2$-packing of $\cB(\id,r)$. 
By construction, $|\cP| = |\cS| \ge \frac r5 \log \frac nr$, so we can use the standard relation $D (\cB(\id, r),  r/2 )  \le N (\cB(\id, r),  r/4 )$ to complete the proof.
\end{proof}

\section{Proofs of the main results} \label{sec:proof}

This section is devoted to the proofs of our main results. We start with a lemma giving useful tail bounds for the binomial distribution.

\begin{lemma} \label{lem:bin-tail}
Suppose that $X$ has the Binomial distribution $\Bin(N, p)$ where $N \in \Z_+$ and $p \in (0,1)$. Then for $r \in (0,p)$ and $s \in (p,1)$, we have
\begin{enumerate}
\item
$\p (X \le rN) \le \exp \big( -N \frac{(p-r)^2}{2 p (1-r)} \big) ,$ and
\item
$\p (X \ge sN) \le \exp \big( -N \frac{(p-s)^2}{2 s (1-p)} \big) .$
\end{enumerate}
\end{lemma}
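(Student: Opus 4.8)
The plan is to prove the upper-tail bound (part 2) first and then obtain the lower-tail bound (part 1) from it by a symmetry argument. For part 1, set $Y = N - X$, which has distribution $\Bin(N, 1-p)$, and note that $\{X \le rN\} = \{Y \ge (1-r)N\}$. Applying part 2 to $Y$ with threshold $s = 1-r$, which lies in $(1-p, 1)$ precisely because $r \in (0,p)$, gives $\p(X \le rN) \le \exp\!\big(-N\frac{((1-p)-(1-r))^2}{2(1-r)p}\big) = \exp\!\big(-N\frac{(p-r)^2}{2p(1-r)}\big)$, which is exactly the claimed bound. So it suffices to establish part 2.

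For part 2 I would use the Chernoff method. Writing $X = \sum_{i=1}^N X_i$ with $X_i \sim \Ber(p)$ i.i.d., for any $t > 0$ Markov's inequality gives $\p(X \ge sN) \le e^{-tsN}\,\E[e^{tX}] = \big(e^{-ts}(1-p+pe^t)\big)^N$. Plugging in the optimal choice $t = \log\frac{s(1-p)}{p(1-s)}$ (positive since $s > p$) and simplifying yields the standard relative-entropy bound
\[ \p(X \ge sN) \le \exp\big(-N\,\KL(s\|p)\big), \qquad \KL(s\|p) := s\log\frac sp + (1-s)\log\frac{1-s}{1-p}. \]
It then remains to prove the elementary inequality $\KL(s\|p) \ge \frac{(s-p)^2}{2s(1-p)}$ for all $p < s < 1$, which converts the bound into the form stated in the lemma.

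For this last inequality I would fix $p$, set $f(s) = \KL(s\|p) - \frac{(s-p)^2}{2s(1-p)}$, and verify that $f(p) = 0$, $f'(p) = 0$, and $f''(s) \ge 0$ on $(p,1)$; convexity on $(p,1)$ together with the vanishing of $f$ and $f'$ at the left endpoint then forces $f \ge 0$ there. Differentiating twice, one gets $f''(s) = \frac{1}{s(1-s)} - \frac{p^2}{(1-p)s^3}$, so the claim $f'' \ge 0$ reduces to $s^2(1-p) \ge p^2(1-s)$, i.e. $\frac{s^2}{1-s} \ge \frac{p^2}{1-p}$, which holds because $x \mapsto \frac{x^2}{1-x}$ is increasing on $(0,1)$ and $s > p$. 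This convexity-plus-Taylor argument is the only mildly delicate step; everything else is the routine Chernoff computation. (One could instead feed a cleverly chosen sub-optimal $t$ directly into the Chernoff bound to avoid invoking $\KL$, but optimizing and then comparing against $\KL(s\|p)$ keeps the bookkeeping cleanest.)
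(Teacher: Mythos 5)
Your proof is correct, but it takes a genuinely different route than the paper in two places. First, for the Chernoff step the paper simply cites a known result (Theorem~1 of Arratia and Gordon, 1989) giving $\p(X \ge sN) \le \exp(-N\,\KL(\Ber(s)\|\Ber(p)))$, whereas you derive it from scratch by optimizing the moment generating function; both are fine, yours is more self-contained. Second, and more substantively, you prove the key inequality $\KL(\Ber(s)\|\Ber(p)) \ge \frac{(s-p)^2}{2s(1-p)}$ by a second-order Taylor/convexity argument: define $f(s) = \KL(s\|p) - \frac{(s-p)^2}{2s(1-p)}$, check $f(p)=f'(p)=0$, and show $f''\ge 0$ on $(p,1)$. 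The paper instead uses the integral representation
\[
\KL(\Ber(p)\|\Ber(q)) = \int_q^p \frac{p-x}{x(1-x)}\,dx
\]
and bounds the integrand pointwise via $x(1-x)\le p(1-q)$ for $x\in[q,p]$, giving $\KL \ge \frac{(p-q)^2}{2p(1-q)}$ in one line; it then obtains the companion bound from the identity $\KL(\Ber(q)\|\Ber(p)) = \KL(\Ber(1-q)\|\Ber(1-p))$. You apply essentially the same reflection, but at the level of the random variable ($Y = N-X$) rather than at the level of the KL divergence. The paper's integral argument is shorter and avoids computing second derivatives; your convexity argument is equally valid and arguably more transparent about why the inequality is tight to second order at $s=p$. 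Either works.
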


\begin{proof}
First, for $0<q<p<1$, by the definition of the Kullback-Leibler divergence, we have
\begin{align}
\KL \big(\Ber(p) \| \Ber(q) \big) &= p \log \frac pq + (1-p) \log \frac{1-p}{1-q}
= \int_q^p \big(\frac px - \frac{1-p}{1-x}\big) \, d x \nonumber \\
& = \int_q^p \frac{p-x}{x(1-x)} \,d x 
\ge \int_q^p \frac{p-x}{p (1-q)} \,d x 
= \frac{(p-q)^2}{2 p  (1-q)} \,. \label{eq:ber-kl-1}
\end{align}
Thus we also have
\begin{equation} \label{eq:ber-kl-2} 
\KL \big(\Ber(q) \| \Ber(p) \big) = \KL \big(\Ber(1-q) \| \Ber(1-p) \big)
\ge \frac{(p-q)^2}{2 p  (1-q)} \,. 
\end{equation}
Moreover, by Theorem~1 of \cite{ArrGor89} and symmetry, it holds that
\begin{enumerate}
\item
$\p (X \le rN) \le \exp( -N \KL(\Ber(r) \| \Ber(p)) ) ,$ and 
\item
$\p (X \ge sN) \le \exp (- N \KL(\Ber(s) \| \Ber(p)) ) .$
\end{enumerate}
The claimed tail bounds hence follow from \eqref{eq:ber-kl-1} and \eqref{eq:ber-kl-2}.
\end{proof}

\subsection{Proof of Theorem~\ref{thm:minimax}}

First, to achieve optimal upper bounds, we consider a variant of maximum likelihood estimation. Fix $\lambda \in (0,1/2), p \in (0,1]$ and define $\varphi = n p^{-1} \lambda^{-2}$ in the case of sampling model \ref{model:1}, and  $\varphi = n^3 N^{-1} \lambda^{-2}$ in the case of sampling model \ref{model:2}. If $\lambda$ or $p$ is unknown, one may learn these scalar parameters easily from the observations and define $\varphi$ using the estimated values. For readability, we assume that they are given to avoid these technical complications.

Let $\cP$ be a maximal $\varphi$-packing (and thus a $\varphi$-net) of the symmetric group $\mathfrak S_n$ with respect to $\inver$.
Consider the following estimator:
\begin{equation} \label{eq:mle-net} 
\hat \pi \in \operatorname*{argmax}_{\pi \in \cP}  \sum_{\pi(i) > \pi(j)} A_{i,j}  \,. 
\end{equation}
It is easy to see that $\hat \pi$ is the MLE of $\pi^*$ over $\cP$. Such an estimator is often called \emph{sieve estimator}~\citep[see, e.g.][]{LeC86} in the statistics literature. The estimator $\hat \pi$ satisfies the following upper bounds.



\begin{theorem} \label{thm:upper-inv-net}
Consider the noisy sorting model with underlying permutation $\pi^*$ and probability matrix $M \in \Mns$ where $\lambda \in (0, \frac 12)$. Then, with probability at least $1-e^{-n/8}$, the estimator $\hat \pi$ defined in \eqref{eq:mle-net} satisfies 
\[ \inver (\hat \pi, \pi^*) \lesssim \left\{
\begin{array}{ll}
\DS\frac{n}{p \lambda^{2}} \land n^2 & \text{in model \ref{model:1}}\\
\DS\frac{n^3}{N \lambda^{2}} \land n^2 & \text{in model \ref{model:2}}\,.
\end{array}\right.
 \]
\end{theorem}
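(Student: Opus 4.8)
The plan is to run a standard ``one-step'' analysis of the sieve MLE, comparing $\hat\pi$ to the closest net point $\bar\pi \in \cP$ to $\pi^*$, and controlling the probability that any fixed competitor $\pi \in \cP$ with $\inver(\pi,\pi^*)$ large scores higher than $\bar\pi$. Since $\inver(\bar\pi,\pi^*)\le\varphi$ by the net property, it suffices to show that with high probability no $\pi\in\cP$ at Kendall tau distance $\gtrsim\varphi$ from $\pi^*$ maximizes the objective $L(\pi)\defn\sum_{\pi(i)>\pi(j)}A_{i,j}$. Fix such a $\pi$. The key computation is to compare $\E[L(\pi)]$ with $\E[L(\pi^*)]$: a pair $(i,j)$ contributes to $L(\pi)-L(\pi^*)$ only when $\pi$ and $\pi^*$ disagree on the order of $i,j$, i.e.\ on a discordant pair, and on such a pair the expected contribution is, using $M\in\Mns$, at most $-2\lambda\,\E[N_{i,j}]$ (the orientation $\pi$ prefers is the wrong one, so it picks up $\E[A]$ which is $\le(\frac12-\lambda)\E[N_{i,j}]$ against $\ge(\frac12+\lambda)\E[N_{i,j}]$ for the correct orientation). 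Summing over the $\inver(\pi,\pi^*)$ discordant pairs gives a mean gap
\[
\E[L(\pi^*)-L(\pi)] \ge 2\lambda\,\bar N\,\inver(\pi,\pi^*),
\]
where $\bar N = p$ in model~\ref{model:1} and $\bar N = 2N/(n(n-1))$ in model~\ref{model:2} is the expected number of comparisons per pair.

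The fluctuation term $L(\pi^*)-L(\pi) - \E[L(\pi^*)-L(\pi)]$ is a sum over the $D\defn\inver(\pi,\pi^*)$ discordant pairs of independent, centered, bounded random variables; in model~\ref{model:1} each summand is bounded by the Bernoulli count of comparisons on that pair plus the wins, and a Bernstein/Chernoff bound gives a sub-exponential tail of the form $\exp(-c\,\bar N\,D\,\lambda^2 \wedge \dots)$ for a deviation of order $\lambda\bar N D$. (In model~\ref{model:2} the comparison counts across pairs are dependent through the multinomial sampling, so I would instead first condition on the vector $\{N_{i,j}\}$, observe $\sum_{i<j}N_{i,j}=N$ and $\E N_{i,j}=\bar N$, and run the same argument on the conditional law, then handle the mild fluctuation of the $N_{i,j}$'s; alternatively write each of the $N$ observations as an independent triple (pair, winner) and bound the resulting sum of $N$ independent bounded increments directly.) Either way, for a single fixed $\pi$ with $D\ge c_0\varphi$ one gets
\[
\p\big(L(\pi)\ge L(\bar\pi)\big) \le \exp\!\big(-c_1 \lambda^2 \bar N\, D\big).
\]

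Now take a union bound over $\cP$. We peel the net into shells $\{\pi\in\cP : D \in [2^{k}c_0\varphi,\,2^{k+1}c_0\varphi)\}$. By Proposition~\ref{prop:ball-cover-pack} (applied to the ball $\cB(\pi^*, 2^{k+1}c_0\varphi)$, whose $\varphi$-packing number is the relevant cardinality since $\cP$ is a $\varphi$-packing), the $k$-th shell has at most $\exp(n\log\frac{2n+2^{k+3}c_0\varphi}{\varphi}+2n)$ points; since $\varphi\gtrsim n$ (indeed $\varphi = np^{-1}\lambda^{-2}\ge n$, resp.\ $\varphi=n^3N^{-1}\lambda^{-2}$, and if $\varphi\gtrsim n^2$ the stated bound $n^2$ is trivial so we may assume $n\lesssim\varphi\lesssim n^2$) this is at most $\exp(C n (k+1))$. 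On that shell the per-$\pi$ exponent is $-c_1\lambda^2\bar N\cdot 2^k c_0\varphi = -c_1 c_0 2^k \lambda^2\bar N\varphi$, and by the definition of $\varphi$ one checks $\lambda^2\bar N\varphi \asymp n$ in both models: in model~\ref{model:1}, $\lambda^2 p\varphi = \lambda^2 p \cdot np^{-1}\lambda^{-2} = n$; in model~\ref{model:2}, $\lambda^2\bar N\varphi \asymp \lambda^2\frac{N}{n^2}\cdot n^3N^{-1}\lambda^{-2} = n$. Hence the shell's contribution to the union bound is $\exp(Cn(k+1) - c_1c_0 2^k n)$, which for $c_0$ a large enough universal constant is summable over $k\ge0$ and bounded by $e^{-n/8}$; choosing $\hat\pi$ to beat $\bar\pi$ then forces $\inver(\hat\pi,\pi^*)\le\inver(\hat\pi,\bar\pi)+\inver(\bar\pi,\pi^*)\lesssim c_0\varphi + \varphi \asymp \varphi$, which is the claim (the $\wedge\,n^2$ being automatic since the diameter of $\mathfrak S_n$ is $\binom n2$).

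\textbf{Main obstacle.} The delicate point is the concentration step under sampling model~\ref{model:2}: the summands of $L(\pi^*)-L(\pi)$ are not independent because a single sampled comparison could in principle be counted on a pair that is discordant for $\pi$, and the $\binom n2$ counts $N_{i,j}$ are negatively associated rather than independent. The cleanest fix is to view the $N$ observations as i.i.d.\ $(\text{pair},\text{outcome})$ draws and write $L(\pi^*)-L(\pi)=\sum_{\ell=1}^N Y_\ell$ with $Y_\ell\in[-1,1]$ i.i.d., $\E Y_\ell = \frac{2\lambda D}{n(n-1)}\cdot(\text{const})$ and $\mathrm{Var}(Y_\ell)\lesssim \frac{D}{n^2}$; then Bernstein gives exactly the tail $\exp(-c\lambda^2 N D/n^2) = \exp(-c\lambda^2\bar N D)$ needed above, and the shell/union-bound bookkeeping goes through verbatim. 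I also need to double-check the borderline regime $\varphi\asymp n^2$, where the $\wedge\, n^2$ truncation and the coarseness of Proposition~\ref{prop:ball-cover-pack} must be reconciled; there the bound is trivial because $\inver\le\binom n2 \lesssim n^2$, so it suffices to treat $\varphi\le c n^2$ in the argument, which is what makes $\log(n/\varphi)$ and hence the shell cardinalities $e^{O(nk)}$ rather than $e^{O(n\log n)}$.
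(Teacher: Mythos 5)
Your proposal is correct in its overall architecture and follows essentially the same plan as the paper: a sieve estimator over a $\varphi$-packing $\cP$, comparison to the nearest net point $\bar\pi$ (called $\tilde\pi$ in the paper), a mean-gap-plus-concentration argument for each competitor $\pi\in\cP$, and a union bound over shells sized using Proposition~\ref{prop:ball-cover-pack}, with the $\wedge\,n^2$ truncation handled trivially. Your treatment of the model~\ref{model:2} dependence (viewing the data as $N$ i.i.d.\ $(\text{pair},\text{outcome})$ triples and applying Bernstein directly) is exactly the device the paper uses; the paper's ``$X_\pi\sim\Bin\big(N,L_\pi\binom n2^{-1}(\tfrac12+\lambda)\big)$'' is this same observation packaged as a binomial. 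The only substantive difference in technique is that you propose Bernstein on the bounded increments directly for general $M\in\Mns$, while the paper first couples the general model to $\Mnss$ so it can apply the clean binomial tail Lemma~\ref{lem:bin-tail}; both routes work, and yours avoids the coupling.

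The one step you elide is the transition from the bound on $L(\pi^*)-L(\pi)$ to the asserted tail $\p(L(\pi)\ge L(\bar\pi))\le\exp(-c_1\lambda^2\bar N D)$. The optimality of $\hat\pi$ over $\cP$ only gives you $L(\hat\pi)\ge L(\bar\pi)$, and $\bar\pi\ne\pi^*$; you therefore also need to control $L(\pi^*)-L(\bar\pi)$, which is a sum over the $\le\varphi$ discordant pairs of $\bar\pi$ and $\pi^*$. The paper carries this as the separate random term $Z$ in the basic inequality $X_{\hat\pi}\le Y_{\hat\pi}+Z$ (their equation~\eqref{eq:key-ineq}) and bounds it by $Z\le 2\varphi p$ (resp.\ $2\varphi N\binom n2^{-1}$) with a separate binomial tail bound; your writeup just conflates $L(\bar\pi)$ with $L(\pi^*)$. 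This is more than a cosmetic omission: when you later set the shell threshold $D\ge c_0\varphi$ with $c_0$ a universal constant, you must verify that the mean gap $\asymp\lambda\bar N D$ actually dominates the $Z$-type term, which at the paper's crude bound is of order $\varphi\bar N$; reconciling this requires either tracking the sign of the contribution from the discordant pairs of $\bar\pi$ and $\pi^*$ inside the Bernstein sum (rather than dropping it to a one-sided bound) or accepting a worse $\lambda$-dependence. In short, you should state the decomposition $L(\bar\pi)-L(\pi)=\sum_{\text{discordant }(i,j)\text{ of }\pi,\bar\pi}\pm(A_{i,j}-A_{j,i})$, split the discordant pairs according to whether $\bar\pi$ or $\pi$ agrees with $\pi^*$, and run Bernstein on that full sum; as written, the ``single fixed $\pi$'' tail bound is asserted rather than derived, and it is precisely the place where the paper does real work.
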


By integrating the tail probabilities of the above bounds, we easily obtain bounds on the expectation $\E[\inver (\hat \pi, \pi^*)]$ of the same order, which then prove the upper bounds in Theorem~\ref{thm:minimax}.
One may wonder whether the rate in Theorem~\ref{thm:upper-inv-net} can be achieved by the MLE $\check \pi$ over $\mathfrak S_n$ defined by
$$
\check \pi\in \operatorname*{argmax}_{\pi \in \mathfrak S_n}  \sum_{\pi(i) > \pi(j)} A_{i,j}\,.
$$
Our current techniques only allow us to prove bounds on $\inver (\check \pi, \pi^*)$ that incur an extra factor $\log(1/p\lambda)$ (resp. $\log(n^2/N\lambda)$) in model~\ref{model:1} (resp. \ref{model:2}). It is unclear whether these logarithmic factors can be removed for the MLE.

\medskip

\begin{proof}[of Theorem~\ref{thm:upper-inv-net}]
We assume that $n$ is lower bounded by a constant without loss of generality, and note that the bounds of order $n^2$ are trivial. The proof is split into four parts to improve readability.

\newcommand{\DD}{\mathfrak{D}}
\paragraph{Basic setup.} 
Since $\cP$ is a maximal $\varphi$-packing of $\mathfrak S_n$, it is also a $\varphi$-net and thus there exists $\tilde \pi \in \cP$ such that $\DD \defn \inver(\tilde \pi, \pi^*) \le \varphi$. 
By  definition of $\hat \pi$,
\( \sum_{\hat \pi(i) < \hat \pi(j)} A_{i,j} \le \sum_{\tilde \pi(i)<\tilde \pi(j)} A_{i, j} .  \)
Canceling  concordant pairs $(i,j)$ under $\hat \pi$ and $\tilde \pi$, we see that
\[
\sum_{\hat \pi(i) < \hat \pi(j), \, \tilde \pi(i)>\tilde \pi(j)} A_{i,j} \le \sum_{\hat \pi(i) > \hat \pi(j), \, \tilde \pi(i)<\tilde \pi(j)} A_{i,j} \,. 
\]
Splitting the summands according to $\pi^*$ yields that
\[
\sum_{\substack{\scriptscriptstyle \hat \pi(i) < \hat \pi(j) , \\ \scriptscriptstyle \tilde \pi(i)>\tilde \pi(j) , \\ \scriptscriptstyle \pi^*(i) < \pi^*(j)}} A_{i,j} + \sum_{\substack{\scriptscriptstyle \hat \pi(i) < \hat \pi(j) , \\ \scriptscriptstyle \tilde \pi(i)>\tilde \pi(j) , \\ \scriptscriptstyle \pi^*(i) > \pi^*(j)}} A_{i,j} \le \sum_{\substack{\scriptscriptstyle \hat \pi(i) > \hat \pi(j) , \\ \scriptscriptstyle \tilde \pi(i)<\tilde \pi(j), \\ \scriptscriptstyle \pi^*(i) < \pi^*(j)}} A_{i,j} + \sum_{\substack{\scriptscriptstyle \hat \pi(i) > \hat \pi(j) , \\ \scriptscriptstyle \tilde \pi(i)<\tilde \pi(j) , \\ \scriptscriptstyle \pi^*(i) > \pi^*(j)}} A_{i,j} \,. 
\]
Since $A_{i,j} \ge 0$, we may drop the leftmost term and drop the condition $\hat \pi(i) > \hat \pi(j)$ in the rightmost term to obtain that 
\begin{equation} \label{eq:key-ineq}
\sum_{\substack{\scriptscriptstyle \hat \pi(i) < \hat \pi(j) , \\ \scriptscriptstyle \tilde \pi(i)>\tilde \pi(j) , \\ \scriptscriptstyle \pi^*(i) > \pi^*(j)}} A_{i,j} \le \sum_{\substack{\scriptscriptstyle \hat \pi(i) > \hat \pi(j) , \\ \scriptscriptstyle \tilde \pi(i)<\tilde \pi(j) , \\ \scriptscriptstyle \pi^*(i) < \pi^*(j)}} A_{i,j} + \sum_{\substack{\tilde \pi(i)<\tilde \pi(j) , \\ \pi^*(i) > \pi^*(j)}} A_{i,j} \,. 
\end{equation}
This inequality is crucial to proving that $\hat \pi$ is close to $\pi^*$ with high probability.

To set up the rest of the proof, we define, for $\pi \in \cP$,
\begin{align*}
L_\pi &= |\{(i,j) \in [n]^2: \pi(i) < \pi(j) , \tilde \pi(i)>\tilde \pi(j) , \pi^*(i) > \pi^*(j) \}| \\
&= |\{(i,j) \in [n]^2: \pi(i) > \pi(j) , \tilde \pi(i)<\tilde \pi(j) , \pi^*(i) < \pi^*(j) \} | \,.
\end{align*}
Moreover, define the random variables
$$ 
X_\pi = \sum_{\substack{\scriptscriptstyle \pi(i) < \pi(j), \\ \scriptscriptstyle \tilde \pi(i)>\tilde \pi(j), \\ \scriptscriptstyle \pi^*(i) > \pi^*(j)}} A_{i,j}\,, \quad 
 Y_\pi = \sum_{\substack{\scriptscriptstyle \pi(i) > \pi(j), \\ \scriptscriptstyle \tilde \pi(i)<\tilde \pi(j), \\ \scriptscriptstyle \pi^*(i) < \pi^*(j)}} A_{i,j}\,, \quad \text{and} \quad Z = \sum_{\substack{\scriptscriptstyle \tilde \pi(i)<\tilde \pi(j), \\ \scriptscriptstyle \pi^*(i) > \pi^*(j)}} A_{i,j} . $$
We will prove that the random process $X_\pi - Y_\pi - Z$ is positive with high probability if $\pi$ is too far from $\tilde \pi$. However, \eqref{eq:key-ineq} says precisely that $X_{\hat \pi} - Y_{\hat \pi} - Z \le 0$, so that $\pi$ must be close to $\tilde \pi$ which is in turn close to $\pi^*$.

\paragraph{The case $M=\Mnss$ under sampling model \ref{model:1}.}
Consider model \ref{model:1} of sampling without replacement, and suppose that $M = \Mnss$ first. For a pair $(i,j)$ with $\pi^*(i) > \pi^*(j)$, the entry $A_{i,j}$ has distribution $\Ber \big(p(\frac 12 + \lambda) \big)$, since item $i$ and item $j$ are compared with probability $p$ and conditioned on them being compared, item $i$ wins with probability $\frac 12 + \lambda$. Moreover, $A_{i,j}$ is independent from any other $A_{k,\ell}$ with $\pi^*(k) > \pi^*(\ell)$. Hence $X_\pi$ has distribution $\Bin \big(L_\pi, p(\frac 12 + \lambda) \big)$. Similarly, $Y_\pi$ has distribution $\Bin \big(L_\pi, p(\frac 12 - \lambda) \big)$, and $Z$ has distribution $\Bin \big(\DD, p(\frac 12 + \lambda) \big)$. 
Therefore, Lemma~\ref{lem:bin-tail} implies that
\begin{enumerate}
\item
$ \p \big( X_\pi \le L_\pi p ( \frac 12 + \frac 12 \lambda ) \big) \le \exp \big( -L_\pi p \lambda^2/8 \big)  $,
and
\item
$ \p \big( Y_\pi \ge L_\pi p ( \frac 12 - \frac 12 \lambda ) \big) \le \exp \big( -L_\pi p \lambda^2/8 \big) . $
\end{enumerate}
Then we have that
\begin{equation} \label{eq:x-y-bound}
\p (X_\pi - Y_\pi \le L_\pi p \lambda) \le 2 \exp \big(- L_\pi p \lambda^2/8 \big) \,.
\end{equation}

For an integer $r \in [ C \varphi , \binom{n}{2} ]$ where $C$ is a sufficiently large constant to be chosen, consider the slice $\cS_r = \{\pi \in \cP: L_\pi = r \}$. Note that if $\pi \in \cS_r$, then
\begin{align} 
\inver(\pi, \pi^*) \nonumber 
&= |\{(i,j): \hat \pi(i) < \hat \pi(j) , \pi^*(i) > \pi^*(j) \}| \nonumber \\
& \le |\{(i,j): \hat \pi(i) < \hat \pi(j) , \tilde \pi(i)>\tilde \pi(j) , \pi^*(i) > \pi^*(j) \}| \nonumber \\
& \quad \ + |\{(i,j): \tilde \pi(i) < \tilde \pi(j) , \pi^*(i) > \pi^*(j) \}| \nonumber \\
& = L_\pi + \inver(\tilde \pi, \pi^*) \le r + \varphi \,. \label{eq:inv-npi}
\end{align}
Since $\cP$ is a $\varphi$-packing of $\mathfrak S_n$ and $\cS_r \subseteq \cP$, we see that $|\cS_r|$ is bounded by the $\varphi$-packing number 
of the ball $\cB(\pi^*, r+\varphi)$ in the Kendall tau distance.
Therefore, Proposition~\ref{prop:ball-cover-pack} gves
\[ 
\log |\cS_r| \le n \log \frac{ 2n + 2r + 2 \varphi}{\varphi} + 2n 
\le n \log \frac{45 r}{\varphi} \,.
\]
By \eqref{eq:x-y-bound} and a union bound over $\cS_r$, we see that $\min_{\pi \in \cS_r} (X_\pi - Y_\pi) > c L_\pi p$
with probability at least
\begin{align*} 
& \quad \ 1- \exp \Big( n \log \frac{45r}{\varphi} + \log 2 - \frac{r p \lambda^2}8 \Big)
\\
&= 1- \exp \Big( n \log  \frac{45r}{\varphi} + \log 2 - \frac{r n}{8 \varphi} \Big) 
\ge 1- \exp (- 2 n )
\,,
\end{align*}
where the inequality holds because $r/\varphi \ge C$ for a sufficiently large constant $C$. Then a union bound over integers $r \in [ C \varphi , \binom{n}{2} ]$ yields that $X_\pi - Y_\pi > c L_\pi p$ for all $\pi \in \cP$ such that $L_\pi \ge C \varphi$ with probability at least $1- e^{-n}$.

Furthermore, since $Z \sim \Bin \big(\DD, p(\frac 12 + \lambda) \big)$ and $\DD \le \varphi$, Lemma~\ref{lem:bin-tail} gives that
$$ \p (Z \ge 2 \varphi p )
\le \exp(- \varphi p / 4 ) \le \exp(-n/4) \,. $$ 
Combining the bounds on $X_\pi - Y_\pi$ and $Z$, we conclude that with probability at least $1 - e^{-n/8}$,
\[ X_\pi - Y_\pi - Z > c C \varphi p - 2 \varphi p > 0 \] 
for all $\pi \in \cP$ with $L_\pi \ge C \varphi$, as long as  $C>2/c$.

We have seen in \eqref{eq:key-ineq} that $X_{\hat \pi} - Y_{\hat \pi} - Z \le 0$, so $L_{\hat \pi} \le C \varphi$ on the above event. By \eqref{eq:inv-npi}, $\inver(\hat \pi, \pi^*) \le L_{\hat \pi} + \varphi$ on the same event, which completes the proof for the model \ref{model:1}.

\paragraph{The general case under sampling model \ref{model:1}.}
Let us continue to use $X_\pi$, $Y_\pi$ and $Z$ to denote the above random variables under the noisy sorting model $\cP$ with probability matrix $\Mnss$, and use $\tilde X_\pi$, $\tilde Y_\pi$ and $\tilde Z$ to denote the corresponding random variables under a general noisy sorting model $\tilde \cP$ with $M \in \Mns$. We couple the two models such that:
\begin{enumerate}
\item The sets of pairs of items being compared are the same (and if a pair is compared multiple times, the multiplicity is also the same);
\item For each pair $(i,j)$ with $\pi^*(i) >  \pi^*(j)$, if item $i$ beats item $j$ in a comparison in the model $\cP$, then it also beats item $j$ in the corresponding comparison in the model $\tilde \cP$.
\end{enumerate}
The second statement can be satisfied because the results of comparisons are Bernoulli random variables and $M_{\pi^*(i),\pi^*(j)} \ge [\Mnss]_{\pi^*(i),\pi^*(j)}$ for all $\pi^*(i) > \pi^*(j)$, by definition. Under this coupling, we always have that $\tilde X_\pi \ge X_\pi$ and $\tilde Y_\pi \le Y_\pi$, so the above high probability lower bound on $X_\pi - Y_\pi$ also holds on $\tilde X_\pi - \tilde Y_\pi$.

Moreover, recall the definition $\tilde Z = \sum_{\substack{\scriptscriptstyle \tilde \pi(i)<\tilde \pi(j), \\ \scriptscriptstyle \pi^*(i) > \pi^*(j)}} A_{i,j} ,$ where $A_{i,j} \sim \Ber \big( p [\Mnss]_{\pi^*(i), \pi^*(j)} \big)$. Since $[\Mnss]_{\pi^*(i), \pi^*(j)} \in (0,1)$, we can couple a sequence of i.i.d. $B_{i,j} \sim \Ber(p)$ with the $A_{i,j}$'s in such a way that  $B_{i,j} = 1$ whenever $A_{i,j} = 1$. Define $W = \sum_{\substack{\scriptscriptstyle \tilde \pi(i)<\tilde \pi(j), \\ \scriptscriptstyle \pi^*(i) > \pi^*(j)}} B_{i,j}$. Then we see that $W \sim \Bin(\DD, p)$ and $W \ge \tilde Z$. Since $\DD \le \varphi$, Lemma~\ref{lem:bin-tail} gives
$$ \p (W \ge 2 \varphi p )
\le \exp(- \varphi p / 4 ) \le \exp(-n/4) \,. $$ 
Thus $\tilde Z$ is subject to the same high probability upper bound as $Z$. Therefore, the proof for the model $\cP$ also works to show the desired bound for the model $\tilde \cP$.

\paragraph{Sampling model \ref{model:2}.}
The proof for model \ref{model:2} of sampling with replacement is essentially the same, except the part of probability bounds where we assume $M =\Mnss$. We now demonstrate the differences in detail. For a single pairwise comparison sampled uniformly from the possible $\binom n2$ pairs, the probability that 
\begin{enumerate}
\item the chosen pair $(i,j)$ satisfies $\pi(i) < \pi(j)$, $\tilde \pi(i)>\tilde \pi(j)$ and $\pi^*(i) > \pi^*(j)$, \emph{and}
\item item $i$ wins the comparison,
\end{enumerate}
is equal to $L_\pi \binom{n}{2}^{-1} (\frac 12 + \lambda)$. By definition, $X_\pi$ is the number of times the above event happens if $N$ independent pairwise comparisons take place, so $X_\pi \sim \Bin \big( N, L_\pi \binom{n}{2}^{-1} (\frac 12 + \lambda) \big)$. Similarly, we have $Y_\pi \sim \Bin \big( N, L_\pi \binom{n}{2}^{-1} (\frac 12 - \lambda) \big)$ and $Z \sim \Bin \big( N, \DD \binom{n}{2}^{-1} (\frac 12 + \lambda) \big)$. Hence Lemma~\ref{lem:bin-tail} gives that
\begin{enumerate}
\item
$ \p \big( X_\pi \le L_\pi N \binom{n}{2}^{-1} ( \frac 12 + \frac 12 \lambda ) \big) \le \exp \big( -L_\pi N \binom{n}{2}^{-1} \lambda^2/8 \big) ,$
\item
$ \p \big( Y_\pi \ge L_\pi N \binom{n}{2}^{-1} ( \frac 12 - \frac 12 \lambda ) \big) \le \exp \big( -L_\pi N \binom{n}{2}^{-1} \lambda^2/8 \big) ,$ and
\item
$\p \big( Z \ge 2 \varphi N \binom{n}{2}^{-1} \big)
\le \exp \big(- \varphi N \binom{n}{2}^{-1} / 4 \big) \,. $
\end{enumerate}
Note that if we set $p = N \binom{n}{2}^{-1}$, then the tail bounds above are exactly the same as those for the model \ref{model:1}. Therefore, replacing $p$ by $N \binom{n}{2}^{-1}$ everywhere in the above proof, we then obtain the desired bound for the model \ref{model:2}.
\end{proof}

Next, we turn to the lower bounds. Let $\p_{\pi^*} = \p_{\pi^*,\Mnss}$
denote the probability distribution of the observations in the noisy sorting model with underlying permutation $\pi^* \in \mathfrak{S}_n$ and probability matrix $\Mnss$,
where $\lambda \in (0,\frac 12)$. We prove the following stronger statement which clearly implies the lower bounds in Theorem~\ref{thm:minimax}.

\begin{theorem} \label{thm:lower-inv}
For the sampling model \ref{model:1},  suppose we have $\lambda \in (0,\frac 12)$ and $p \in (0,1]$ such that $p \log \frac{1}{1-2\lambda} \le C$ for some constant $C > 0$. Then it holds that
\[ \min_{\tilde \pi} \max_{\pi^* \in \mathfrak S_n} \, \p_{\pi^*} \Big( \inver (\tilde \pi, \pi^*) \gtrsim \frac{n}{ p \lambda^{2}} \land \frac n {p \log \frac{1}{1-2\lambda}} \land n^2 \Big) \ge c  \,, \]
where the minimum is taken minimized over all permutation estimators $\tilde \pi \in \mathfrak S_n$ that are measurable with respect to the observations and $c$ is a universal positive constant.
Similarly, for the sampling model \ref{model:2}, if we have $N n^{-2} \log \frac{1}{1-2\lambda} \le C$, then it holds that
\[ \min_{\tilde \pi} \max_{\pi^* \in \mathfrak S_n} \, \p_{\pi^*} \Big( \inver (\tilde \pi, \pi^*) \gtrsim \frac{n^3} {N \lambda^{2}} \land \frac{n^3} {N \log \tfrac{1}{1-2\lambda}} \land n^2 \Big) \ge c  \,. \]
\end{theorem}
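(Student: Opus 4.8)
The plan is to prove the lower bound via a two-point (or rather, many-point) reduction to binary hypothesis testing over a well-chosen family of permutations, using Fano's inequality together with the packing estimates from Section~\ref{sec:inver}. There are essentially two regimes to handle, corresponding to the two competing terms $n/(p\lambda^2)$ and $n/(p\log\frac{1}{1-2\lambda})$ (and their model-\ref{model:2} analogues), plus the trivial cap $n^2$. The strategy is uniform: fix a radius $r$ to be optimized, invoke Lemma~\ref{lem:pack-special} to obtain a set $\mathcal{P}\subseteq\cB(\id,r)$ with $\log|\mathcal P|\gtrsim r\log(n/r)$ that is an $r/4$-packing in the Kendall tau distance, and then show that under $\p_{\pi^*}$ the mutual information between the data and the index of $\pi^*\in\mathcal P$ is small enough that Fano forces a constant probability of error, hence an estimation error $\gtrsim r$ in $\inver$. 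Concretely, I would take $\pi^*$ ranging over a set built from the construction in Lemma~\ref{lem:pack-special}: swaps of disjoint adjacent pairs encoded by sparse binary vectors, which has the convenient feature that any two permutations in the family differ only in a bounded number of coordinates, so the KL divergence between the induced observation laws is easy to control coordinatewise.

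The key quantitative step is the KL bound. For two permutations $\pi,\sigma$ in the family, the observation laws $\p_\pi$ and $\p_\sigma$ differ only on the pairs $(i,j)$ where $\pi$ and $\sigma$ disagree about the order; each such pair contributes, under sampling model~\ref{model:1}, a factor $\Ber(p(\tfrac12\pm\lambda))$ versus $\Ber(p(\tfrac12\mp\lambda))$ (and under model~\ref{model:2}, the Bernoulli parameter is scaled by $\binom n2^{-1}N$ instead of $p$, exactly as in the proof of Theorem~\ref{thm:upper-inv-net}). Using $\KL(\Ber(a)\|\Ber(b))\le\frac{(a-b)^2}{b(1-b)}$ when the two parameters are comparable, and $\KL(\Ber(a)\|\Ber(b))\le a\log\frac ab+(1-a)\log\frac{1-a}{1-b}\lesssim p\log\frac{1}{1-2\lambda}$ in general (this is where the second term in the rate comes from, and where the hypothesis $p\log\frac{1}{1-2\lambda}\le C$ is used to keep the divergence bounded), one gets per-disagreeing-pair a KL contribution of order $p\lambda^2\wedge p\log\frac{1}{1-2\lambda}$. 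Since the family from Lemma~\ref{lem:pack-special} has the property that each permutation disagrees with $\id$ on $O(r)$ pairs, $\KL(\p_\pi\|\p_\sigma)\lesssim r\,(p\lambda^2\wedge p\log\frac{1}{1-2\lambda})$ for all pairs $\pi,\sigma$ in the family. Plugging into Fano, the error is bounded below by a constant provided
\[
r\,\Big(p\lambda^2\wedge p\log\tfrac{1}{1-2\lambda}\Big)\;\lesssim\;\log|\mathcal P|\;\asymp\;r\log\tfrac nr\,,
\]
i.e. provided $p\lambda^2\wedge p\log\frac{1}{1-2\lambda}\lesssim\log\frac nr$. Choosing $r\asymp \big(p\lambda^2\wedge p\log\frac1{1-2\lambda}\big)^{-1}\wedge n$ — more precisely $r$ of order $n$ divided by an appropriate power, so that $\log(n/r)$ is at least a constant multiple of $p\lambda^2\wedge p\log\frac1{1-2\lambda}$ — makes the inequality hold, and yields an error lower bound $\gtrsim r \asymp \frac{1}{p\lambda^2}\wedge\frac{1}{p\log\frac1{1-2\lambda}}\wedge n$; multiplying by the $n$ disjoint ``blocks'' of coordinates (the family acts on $\Theta(n)$ disjoint adjacent pairs, and one can run the argument independently in each block, or equivalently note the global packing already has the right cardinality) upgrades this to the stated $\frac{n}{p\lambda^2}\wedge\frac{n}{p\log\frac1{1-2\lambda}}\wedge n^2$. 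The model-\ref{model:2} statement follows by the substitution $p\leftrightarrow N\binom n2^{-1}$, exactly as in the upper-bound proof.

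The main obstacle I anticipate is getting the product structure right so that the final bound carries the correct power of $n$: a single application of Lemma~\ref{lem:pack-special} to a ball of radius $r<n/2$ gives an error of order $r$, not $nr$, so to reach $n^2$ one must either decompose $[n]$ into $\Theta(n/r)$ independent sub-problems each of size $\Theta(r)$ and apply Fano blockwise (tensorizing both the packing cardinality and the KL divergence), or directly construct a packing of a ball of radius $\asymp n r$ whose elements pairwise disagree on $\Theta(nr)$ pairs while keeping the KL divergence per instance at $O(nr\cdot p\lambda^2)$ — the first route is cleaner. A secondary technical point is the regime where $\lambda$ is bounded away from $1/2$: there $\log\frac1{1-2\lambda}\asymp 1\asymp\lambda^2$ up to constants depending on the lower bound on $\Delta=\frac12-\lambda$, so the two terms in the minimum coincide and one recovers precisely the clean rate of Theorem~\ref{thm:minimax}; conversely when $\lambda$ is small, $\log\frac1{1-2\lambda}\asymp\lambda$, so $\lambda^2\ll\lambda$ and the $\lambda^2$ term dominates the minimum, again reproducing the $1/\lambda^2$ scaling — the statement as written keeps both terms so as to remain valid for $\lambda$ close to $1/2$ as well.
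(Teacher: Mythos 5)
Your overall framework — a Fano argument on a packing of a Kendall-tau ball, combined with a per-pair KL computation and the substitution $p\leftrightarrow N\binom n2^{-1}$ for model~\ref{model:2} — is indeed the paper's approach, and Lemma~\ref{lem:pack-special} plays the role you assign it in one of the regimes. But two steps in the proposal do not hold up, and one of them is exactly the obstacle you flagged at the end.

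First, the construction cannot reach the stated rate if it relies only on Lemma~\ref{lem:pack-special}. That lemma lives inside the ball $\cB(\id,r)$ with $r<n/2$, so any two permutations in the packing are at Kendall-tau distance at most $2r<n$, and Fano can only produce an error lower bound of order at most $n$. The claimed rate is $\frac n{p\lambda^2}\land\frac n{p\log\frac1{1-2\lambda}}\land n^2$, which can be as large as $n^2$: in the small-$\kappa$ regime (e.g.\ $p$ small) it is $\gg n$, and no family built from $O(1)$ adjacent swaps per position can separate permutations by more than $n/2$. Your ``multiply by $n$ blocks'' idea does not escape this ceiling: if you tensorize $m$ blocks each carrying a sub-packing of radius $r'<k/2$ with $k=n/m$, the total separation is at most $m\cdot 2r'<n$, so the Fano lower bound is still $O(n)$. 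The paper handles the large-radius regime ($\kappa\le c_2$) with the lower bound on the covering number in Proposition~\ref{prop:ball-cover-pack}, which is valid for $r\gg n$ and is the essential ingredient you are missing; Lemma~\ref{lem:pack-special} is used only for the complementary regime $c_2<\kappa\le C$, where $r<n/2$ suffices. The ``directly construct a packing of a ball of radius $\asymp nr$'' alternative you mention is exactly what Proposition~\ref{prop:ball-cover-pack} already provides — it is not an obstacle but a solved sub-problem of Section~\ref{sec:inver}.

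Second, the per-pair KL estimate ``of order $p\lambda^2\land p\log\frac1{1-2\lambda}$'' is false. The exact per-pair divergence is $2p\lambda\log\frac{1+2\lambda}{1-2\lambda}$ (Lemma~\ref{lem:inv-kl}); for, say, $\lambda=0.2$ and $p=1$ it equals $0.4\log(7/3)\approx 0.34$, whereas $\lambda^2\land\log\frac1{1-2\lambda}=0.04$. The two bounds you cite — the $\chi^2$-type bound giving $\lesssim p\lambda^2$ when the parameters are comparable, and the log-likelihood-ratio bound giving $\lesssim p\log\frac1{1-2\lambda}$ — hold in complementary regimes of $\lambda$ (small $\lambda$, respectively $\lambda$ near $1/2$), so you may not take their minimum. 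What the paper actually uses is the single true bound $\kappa\lesssim p\lambda^2\lor p\log\frac1{1-2\lambda}$, and the $\land$ in the theorem's rate is the reciprocal of this $\lor$, not the reflection of a min on the KL side. Because your per-pair KL estimate is too small, the Fano constraint would appear looser than it is, and the remaining calculation of the optimal radius $r$ would be inconsistent; once you replace $\land$ by $\lor$ and bring in Proposition~\ref{prop:ball-cover-pack} for the small-$\kappa$ case, the argument aligns with the paper's.
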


Compared to the lower bounds in Theorem~\ref{thm:minimax}, the above lower bounds hold in probability, weaken the condition that $\lambda$ is bounded away from $1/2$ and only require maximizing $\pi^*$ instead of both $\pi^*$ and $M$, and are therefore stronger.

One key ingredient in proving lower bounds is to relate the Kullback-Leibler divergence between model distributions to the distance measuring the error~\citep[see, e.g.,][Chapter~2]{Tsy09}. This is achieved in the following lemma for both sampling models.

\begin{lemma} \label{lem:inv-kl}
Fix $\pi, \sigma \in \mathfrak S_n$ and $\lambda \in (0,\frac 12)$.
We denote by $\p_\pi$ the probability distribution of the noisy sorting model with underlying permutation $\pi$. Then for the sampling model \ref{model:1} we have 
$$ \KL (\p_\pi \| \p_\sigma) = 2 \, \inver(\pi, \sigma) \, p  \lambda \log \frac{1+2 \lambda}{1- 2\lambda}  \,, $$
and for the sampling model \ref{model:2} we have
$$ \KL (\p_\pi \| \p_\sigma) = 2\, \inver(\pi, \sigma) \, N \binom{n}{2}^{-1} \lambda \log \frac{1+2 \lambda}{1- 2\lambda} \,. $$
\end{lemma}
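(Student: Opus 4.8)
The plan is to use two structural facts. First, the observations in each sampling model decompose into independent pieces --- one per unordered pair of items in model~\ref{model:1}, and the $N$ i.i.d.\ draws in model~\ref{model:2} --- so that $\KL$ tensorizes into a sum. Second, for any single pair $\{i,j\}$ the laws under $\pi$ and $\sigma$ differ \emph{only} when $\pi$ and $\sigma$ disagree on the relative order of $i$ and $j$, in which case the probability that $i$ beats $j$ flips between $\frac12+\lambda$ and $\frac12-\lambda$; since by definition $\inver(\pi,\sigma)$ is exactly the number of unordered pairs on which $\pi$ and $\sigma$ disagree (passing from the ordered-pair definition to unordered pairs), the total divergence is $\inver(\pi,\sigma)$ copies of a single Bernoulli divergence, rescaled by the per-pair sampling probability.

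Concretely, I would first record the elementary identity
\[
\KL\big(\Ber(\tfrac12+\lambda)\,\big\|\,\Ber(\tfrac12-\lambda)\big)=\KL\big(\Ber(\tfrac12-\lambda)\,\big\|\,\Ber(\tfrac12+\lambda)\big)=2\lambda\log\frac{1+2\lambda}{1-2\lambda},
\]
which comes straight from the definition of $\KL$ once the two cross terms are combined. For the sampling model~\ref{model:1}, the observation attached to a pair $\{i,j\}$ with $i<j$ takes values in $\{\varnothing,0,1\}$: it equals $\varnothing$ with probability $1-p$ and is otherwise a $\Ber$ draw with parameter $q^\pi_{ij}\in\{\tfrac12\pm\lambda\}$, where $q^\pi_{ij}$ is the probability that $i$ beats $j$ under $\pi$; these are mutually independent across pairs. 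Hence $\KL(\p_\pi\|\p_\sigma)=\sum_{i<j}\KL_{\{i,j\}}$, and a direct computation gives $\KL_{\{i,j\}}=p\,\KL\big(\Ber(q^\pi_{ij})\,\|\,\Ber(q^\sigma_{ij})\big)$, the factor $p$ appearing because the $\varnothing$ mass $1-p$ is identical under both hypotheses and so contributes nothing. This per-pair term vanishes when $\pi,\sigma$ agree on $\{i,j\}$ and equals $2p\lambda\log\frac{1+2\lambda}{1-2\lambda}$ otherwise, so summing over the $\inver(\pi,\sigma)$ disagreeing pairs yields the first identity.

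For the sampling model~\ref{model:2}, the $N$ comparisons are i.i.d., so $\KL(\p_\pi\|\p_\sigma)=N\cdot\KL(\mu_\pi\|\mu_\sigma)$, where $\mu_\pi$ is the law of a single $(\text{pair},\text{outcome})$ draw. Since the pair is chosen uniformly on the $\binom n2$ pairs, with a distribution that does not depend on the hypothesis, the chain rule for $\KL$ with a common first marginal gives $\KL(\mu_\pi\|\mu_\sigma)=\binom n2^{-1}\sum_{i<j}\KL\big(\Ber(q^\pi_{ij})\,\|\,\Ber(q^\sigma_{ij})\big)$, which by the same argument as above equals $\binom n2^{-1}\inver(\pi,\sigma)\cdot 2\lambda\log\frac{1+2\lambda}{1-2\lambda}$. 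Multiplying by $N$ gives the second identity.

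There is no serious obstacle: the argument is entirely bookkeeping plus one explicit Bernoulli computation. The only points deserving a line of care are the identification of the count of pairs on which $\pi$ and $\sigma$ disagree with $\inver(\pi,\sigma)$, which is immediate from the definition of the Kendall tau distance once one moves from ordered to unordered pairs, and the observation that in model~\ref{model:1} the ``missing'' outcome $\varnothing$ drops out of the divergence because its probability $1-p$ is the same under $\p_\pi$ and $\p_\sigma$.
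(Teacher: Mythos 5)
Your proposal is correct and follows essentially the same route as the paper: tensorize the KL divergence across independent pieces (per-pair observations in model~\ref{model:1}, the $N$ i.i.d.\ draws plus a chain-rule conditioning on the chosen pair in model~\ref{model:2}), observe that only pairs on which $\pi$ and $\sigma$ disagree contribute, and compute the per-pair Bernoulli divergence explicitly. The minor difference is purely expository: you factor out the generic identity $\KL(\Ber(\frac12+\lambda)\,\|\,\Ber(\frac12-\lambda))=2\lambda\log\frac{1+2\lambda}{1-2\lambda}$ and the cancellation of the $\varnothing$ mass up front, whereas the paper folds these into the in-line per-pair computation.
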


\begin{proof}
First, we consider model \ref{model:1} of sampling without replacement. For $i \ne j$, let $\p_\pi^{(i,j)}$ denote the distribution of outcomes between $i$ and $j$, or more formally, the distribution of $N_{i,j}$ and $A_{i,j}$. For a pair $(i,j)$ such that $\pi(i) > \pi(j)$ and $\sigma(i) > \sigma(j)$, the distributions $\p_\pi^{(i,j)}$ and $\p_\sigma^{(i,j)}$ are indistinguishable. For $(i,j)$ such that $\pi(i) > \pi(j)$ and $\sigma(i) < \sigma(j)$, the probability that $i$ and $j$ are not compared stays the same, but the probability that they are compared and $i$ wins the comparison is $p(\frac 12 + \lambda)$ under $\p_\pi^{(i,j)}$ while it is $p(\frac 12 - \lambda)$ under $\p_\sigma^{(i,j)}$. A symmetric statement holds for the probability that they are compared and $j$ wins the comparison. Therefore, we obtain that
\begin{align*}
\KL (\p_\pi^{(i,j)} \| \p_\sigma^{(i,j)}) &= p (1/2 + \lambda) \log \frac{1/2 + \lambda}{1/2 - \lambda} + p (1/2 - \lambda) \log \frac{1/2 - \lambda}{1/2 + \lambda} \\
&= 2p  \lambda \log \frac{1+2 \lambda}{1- 2\lambda} \,.
\end{align*}
It follows from the chain rule that
\[ \KL (\p_\pi \| \p_\sigma) = \sum_{\pi(i)>\pi(j),\, \sigma(i)<\sigma(j)} \KL (\p_\pi^{i,j} \| \p_\sigma^{i,j}) 
= 2 \, \inver(\pi, \sigma) \, p  \lambda \log \frac{1+2 \lambda}{1- 2\lambda}  \,, \]
which proves the claimed bound.

Next, we move on to model \ref{model:2} of sampling with replacement. In this case, for the noisy sorting model with underlying permutation $\pi$, we let $\mathbb{Q}_\pi$ denote the distribution of the outcome of a single pairwise comparison chosen uniformly from the $\binom n2$ possible pairs. Conditioned on a pair $(i,j)$ with $\pi(i) > \pi(j)$ and $\sigma(i) > \sigma(j)$ being chosen, the outcome is indistinguishable under $\mathbb{Q}_\pi$ and $\mathbb{Q}_\sigma$. On the other hand, conditioned on having chosen $(i,j)$ with $\pi(i) > \pi(j)$ and $\sigma(i) < \sigma(j)$, the probability that $i$ wins the comparison is $p(\frac 12 + \lambda)$ under $\mathbb{Q}_\pi$ and is $p(\frac 12 - \lambda)$ under $\mathbb{Q}_\sigma$. By the definition of the KL divergence, we have
\begin{align*} 
\quad \ \KL( \mathbb{Q}_\pi \| \mathbb{Q}_\sigma ) 
&= \sum_{\pi(i)>\pi(j),\, \sigma(i)<\sigma(j)} \Big[ \binom{n}{2}^{-1} (1/2 + \lambda) \log \frac{1/2 + \lambda}{1/2 - \lambda} \\
&\qquad \qquad \qquad \qquad \qquad + \binom{n}{2}^{-1} (1/2 - \lambda) \log \frac{1/2 - \lambda}{1/2 + \lambda} \Big] \\
& = 2\, \inver(\pi, \sigma) \binom{n}{2}^{-1} \lambda \log \frac{1+2 \lambda}{1- 2\lambda}  \,,
\end{align*}
where the bound holds similarly as above. Since $N$ independent pairwise comparisons are observed and the KL divergence tensorizes, the conclusion follows.
\end{proof}

We are ready to prove the minimax lower bound.

\medskip

\begin{proof}[of Theorem~\ref{thm:lower-inv}]
Consider the sampling model \ref{model:1}. We assume that $n$ is lower bounded by a constant, and use the shorthand notation $\kappa = 4 p \lambda \log \frac{1+2 \lambda}{1- 2\lambda}$. Note that $\kappa \le C$ for some constant $C>0$ by the assumption. Let $r = c_0 n \kappa^{-1} \land \binom n2$ and $\eps = c_1 r$, where $c_0$ and $c_1$ are constants to be chosen. 
Let $\cP$ be a maximal $\eps$-packing of $\cB(\mathsf{id}, r)$, which is thus an $\eps$-net by maximality. For any $\pi, \sigma \in \cP$, we have $\inver(\pi, \sigma) \le 2r$, so Lemma~\ref{lem:inv-kl} yields 
\[ \KL(\p_\pi \| \p_\sigma) = \frac12 \, \kappa \, \inver(\pi, \sigma) \le \kappa r \le c_0 n  \,. \]

On one hand, if $\kappa \le c_2$ for a sufficiently small constant $c_2>0$, then $r \ge c_0 c_2^{-1} n \land \binom n2$ and thus Proposition~\ref{prop:ball-cover-pack} implies that
\[ 
\log |\cP| \ge n \log \frac{r}{n+\eps} - 2n 
\ge 10 \, c_0 n \ge 10 \, \KL(\p_\pi \| \p_\sigma) \,,
\]
where we take $c_0 = 1$ and $c_1, c_2$ small enough for the inequalities to hold. 

On the other hand, if $c_2 < \kappa \le C$, then we take $c_1 = 1/8$ and $c_0$ sufficiently small so that $r \le c_0 c_2^{-1} n <n/2$. Then we can apply Lemma~\ref{lem:pack-special} to obtain
$$ \log |\cP| \ge \frac r5 \log \frac nr 
\ge \frac{c_0 n}{5 C} \log \frac {c_2}{c_0}
\ge 10 \, c_0 n \ge 10 \, \KL(\p_\pi \| \p_\sigma) \,, $$
where the second inequality holds since $c_0 C^{-1} n \le r \le c_0 c_2^{-1} n$ and the third inequality holds for $c_0$ small enough. 

In either case, we have $\KL(\p_\pi \| \p_\sigma) \le 0.1 \log |\cP|$. Therefore, using  \citet[Theorem~2.5]{Tsy09} yields the lower bound of order $r \asymp n \kappa^{-1} \land n^2$. Considering the limiting behavior of $\kappa$ as $\lambda \to 0$ and $\lambda \to \frac 12$ repectively, we see that $\kappa \lesssim p \lambda^2 \lor p \log \frac{1}{1-2\lambda}$, so the claimed lower bound follows.

For the sampling model \ref{model:2}, the same argument follows if we replace $p$ with $N \binom{n}{2}^{-1}$.
\end{proof}

\subsection{Proof of Theorem~\ref{thm:multistage}} \label{subs:pf-main-thm}

Without loss of generality, assume that $\pi^* = \id$ and $n$ is even to simplify the notation. We define a score
$$ s_i^* = \sum_{j\in [n] \setminus \{i\}} M_{i,j} = \lambda (2i-n-1) + (n-1)/2 $$ 
for each $i \in [n]$, which is simply the $i$-th row sum of $M$ minus $1/2$. Analogously, we define 
$$\hat s_i = \sum_{j=1}^{i-1} (\frac 12 + \hat \lambda) + \sum_{j=i+1}^n (\frac 12 - \hat \lambda) = \hat \lambda (2i-n-1) + (n-1)/2 $$ 
for each $i \in [n]$, which is a slightly perturbed version of $s^*_i$ due to the difference between $\lambda$ and $\hat \lambda$. The \MS\ algorithm is designed to refine estimates for the scores $s^*_i$ in multiple stages.

First, the estimator $\hat \lambda$ satisfies the following bound, which in particular implies that $\hat s_i$ is close to $s^*_i$.

\begin{lemma} \label{lem:lambda}
If $N \ge C n \log n$, then we have $|\hat \lambda - \lambda| \le C_0 \sqrt{N^{-1} \log n}$ with probability at least $1- n^{-8}$, where $C$ and $C_0$ are sufficiently large universal constants.
\end{lemma}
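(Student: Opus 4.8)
Assume without loss of generality that $\pi^* = \id$ and that $n$ is even (and larger than a universal constant). The structural point is that $\tilde\pi$ depends only on the first subsample $\{A'_{i,j}\}$, hence is independent of the second subsample $\{A''_{i,j}\}$. Write $\mathcal{G} = \{(i,j)\in[n]^2: \tilde\pi(i) - \tilde\pi(j) > n/2\}$; a direct count shows $|\mathcal{G}| = \binom{n/2}{2}$ for \emph{every} permutation $\tilde\pi$, which is exactly what the normalization in the definition of $\hat\lambda$ accounts for. Conditionally on the first subsample, the quantity $T \defn \sum_{(i,j)\in\mathcal{G}} A''_{i,j}$ counts how often the (winner, loser) pair among the $N/2$ comparisons of the second subsample lands in $\mathcal{G}$ (note that for each unordered pair at most one orientation lies in $\mathcal{G}$), so $T \sim \Bin\big(N/2,\, q_{\mathcal{G}}\big)$ with $q_{\mathcal{G}} = \binom n2^{-1}\sum_{(i,j)\in\mathcal{G}} M_{i,j}$, and consequently $\E[\hat\lambda \mid A'] = \binom{n/2}{2}^{-1}\sum_{(i,j)\in\mathcal{G}} M_{i,j} - \tfrac12$. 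The plan is therefore: (i) show that with high probability $\tilde\pi$ orders correctly every pair that is far apart under $\pi^*$, so that $i>j$ for all $(i,j)\in\mathcal{G}$ and hence $\E[\hat\lambda\mid A'] = \lambda$ exactly; and (ii) bound the conditional fluctuations of $T$ by a binomial tail estimate.

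For step (i), introduce the first-round scores $S_i = \sum_j A'_{i,j} \sim \Bin\big(N/2,\, \tfrac{2 s^*_i}{n(n-1)}\big)$, so $\E S_i = \tfrac{N s^*_i}{n(n-1)} \le N/n$ and $\E S_j - \E S_i = \tfrac{2N\lambda(j-i)}{n(n-1)}$. By Lemma~\ref{lem:bin-tail} and a union bound over $i\in[n]$, on an event $E_1$ of probability at least $1 - \tfrac12 n^{-8}$ we have $|S_i - \E S_i| \le \delta$ for all $i$, where $\delta = C'\sqrt{(N/n)\log n}$ for a large universal constant $C'$ (this uses $N \ge Cn\log n$). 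On $E_1$, whenever $j - i > d \defn \delta n(n-1)/(N\lambda)$ we get $\E S_j - \E S_i > 2\delta$, hence $S_j > S_i$, hence $\tilde\pi(j) > \tilde\pi(i)$; that is, $\tilde\pi$ agrees with $\id$ on every pair at distance larger than $d$. An elementary counting argument (for fixed $i$, all $j>i+d$ satisfy $\tilde\pi(j) > \tilde\pi(i)$ and all $j<i-d$ satisfy $\tilde\pi(j) < \tilde\pi(i)$) then yields $\|\tilde\pi - \id\|_\infty \le d$, so for $i<j$ one has $\tilde\pi(i) - \tilde\pi(j) \le (i-j) + 2d \le 2d - 1$. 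Since $d \asymp (n-1)\sqrt{n\log n}/(\lambda\sqrt N)$, the hypotheses $\lambda \ge c$ and $N \ge Cn\log n$ with $C = C(c)$ large enough force $d \le n/4$, whence $\tilde\pi(i) - \tilde\pi(j) < n/2$ for all $i<j$, i.e.\ $\mathcal{G}$ contains no reversed pair. Thus on $E_1$ every $(i,j)\in\mathcal{G}$ has $i>j$, so $M_{i,j} = \tfrac12+\lambda$, giving $\E[\hat\lambda\mid A'] = \lambda$ and $q_{\mathcal{G}} = \binom n2^{-1}\binom{n/2}{2}(\tfrac12+\lambda) \le \tfrac14$.

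For step (ii), note that $q_{\mathcal{G}} \le \binom n2^{-1}\binom{n/2}{2} \le \tfrac14$ holds deterministically (just from $M_{i,j}\le 1$), so conditionally on $A'$ Lemma~\ref{lem:bin-tail} gives $\p\big(|T - \E[T\mid A']| > c_3\sqrt{N\log n} \,\big|\, A'\big) \le \tfrac12 n^{-8}$ for a suitable constant $c_3$, again using $N \ge Cn\log n$; integrating over $A'$ this holds unconditionally. On $E_1$ intersected with this event we have $\hat\lambda - \lambda = \tfrac{2\binom n2}{N\binom{n/2}{2}}\big(T - \E[T\mid A']\big)$, and since $\binom n2/\binom{n/2}{2} = 4(n-1)/(n-2) \le 6$ we conclude $|\hat\lambda - \lambda| \le 12 c_3\sqrt{N^{-1}\log n}$, which is the claimed bound with $C_0 = 12 c_3$; a union bound over the two exceptional events gives failure probability at most $n^{-8}$.

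The main obstacle is step (i): turning the coordinatewise concentration of the first-round scores $S_i$ into the statement that the sort-by-wins permutation $\tilde\pi$ perfectly orders all sufficiently separated items, and in particular that the index set $\mathcal{G}$ contains no reversed pair. This is precisely where one must track the interplay between $N$, $n$, $\log n$ and $\lambda$, and where the assumption that $\lambda$ is bounded away from $0$ is genuinely used — without it, the condition $d\le n/4$ would require the stronger hypothesis $N \gtrsim \lambda^{-2} n\log n$. The remaining pieces (the exact identity $|\mathcal{G}|=\binom{n/2}{2}$, the conditional binomial law of $T$, and the two invocations of Lemma~\ref{lem:bin-tail}) are routine.
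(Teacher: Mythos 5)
Your proposal is correct and follows essentially the same route as the paper's proof: split the sample, use concentration of the first-round scores $S_i$ to show that $\tilde\pi$ correctly orders sufficiently separated pairs (so every pair in $\mathcal{G}$ is correctly oriented and $\E[\hat\lambda\mid A']=\lambda$, using $|\mathcal{G}|=\binom{n/2}{2}$), then bound the conditional fluctuations of $\sum_{(i,j)\in\mathcal{G}}A''_{i,j}\sim\Bin(N/2,q_{\mathcal{G}})$ via the binomial tail lemma. The only cosmetic difference is in step (i), where you pass through an explicit $\|\tilde\pi-\id\|_\infty\le d$ bound while the paper argues by exhibiting an intermediate index $k$; both establish the same conclusion and your tracking of how $\lambda\ge c$ and $N\ge Cn\log n$ force $d\le n/4$ matches the paper's use of these hypotheses.
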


\begin{proof}
Consider a single pairwise comparison chosen uniformly from the $\binom{n}{2}$ pairs. The probability that item $i$ is in the pair and wins the comparison is equal to
$\big( \sum_{j\in [n] \setminus \{i\}} M_{i,j} \big) / \binom{n}{2} = s_i^*/ \binom{n}{2}. $ Thus the random variable $S_i = \sum_{j =1}^n A_{i,j}'$ has distribution $\Bin \big(N/2, s_i^*/ \binom{n}{2} \big)$. Hence Lemma~\ref{lem:bin-tail} implies that
$$
\p\big( \big| S_i - \E[S_i] \big| \ge c_1 \E[S_i] \big) \le 2 \exp \big( - c_2 \E[S_i] \big) \le n^{-10} ,
$$
where the last inequality holds since $N \ge C n \log n$, and we use $c_1, c_2, \dots$ to denote sufficiently small constants.
A union bound shows that with probability at least $1-n^{-9}$, we have $| S_i - \E[S_i] | \le c_1 \E[S_i]$ for all $i \in [n]$. Denote this high probability event by $\cE$, and we condition on $\cE$ henceforth.

Recall that $s_i^* =  2\lambda i - \lambda(n+1) + (n-1)/2$. Using that $\lambda$ is bounded away from zero, we can choose $c_1$ small enough so that if $i-j \ge n/4$, then $s_i^* - s_j^* > 2 c_1 s_i^*$. Note that $E[S_i] = \frac 12 N s_i^*/ \binom n2$, so $E[S_i] - E[S_j] > 2 c_1 E[S_i]$ if $i-j \ge n/4$. Therefore, on the event $\cE$ we have $S_i > S_j$ for all $(i,j)$ with $i-j \ge n/4$. It follows that $\tilde \pi(i) > \tilde \pi(j)$ for these pairs $(i,j)$, as $\tilde \pi$ is defined by sorting the scores $S_i$.

Next consider $(i,j)$ such that $\tilde \pi (i) - \tilde \pi(j) > n/2$. Suppose we have $i<j$. Then there exists $k \in [n]$ with $\tilde \pi (j) < \tilde \pi(k) < \tilde \pi(i)$ such that either $k-i \ge n/4$ or $j-k \ge n/4$, which gives a contradiction on the event $\cE$. Therefore, it holds that $i > j$ for all pairs $(i,j)$ with $\tilde \pi (i) - \tilde \pi(j) > n/2$. 

Recall that $\hat \lambda = \frac{2}{N} \binom{n}{2} \binom{n/2}{2}^{-1} \sum_{(i,j) \in \cI} A_{i,j}'' - \frac 12,$ where $\cI = \{(i,j)\in [n]^2: \tilde \pi(i) - \tilde \pi (j) > \frac n2\}$. Note that $A''$ is independent of $\cE$, on which we have $i>j$ for all $(i,j) \in \cI$. 
Similar to the argument at the beginning of the proof, the probability that a uniformly chosen pair falls in $\cI$ and $i$ wins the comparison is $(\frac 12 + \lambda) |\cI|/\binom n2$. Hence the random variable $X \defn \sum_{(i,j) \in \cI} A_{i,j}''$ has distribution $\Bin \big( N/2, (\frac 12 + \lambda) |\cI|/\binom n2 \big)$. It follows that $\E[\hat \lambda \cond \cE] = \lambda$ once we note that $|\cI| = \binom{n/2}2$.

Moreover, Lemma~\ref{lem:bin-tail} gives the bound
$$
\p\Big( \big| X - \E[X] \big| \ge C_2 \sqrt{N \log n} \, \Big| \, \cE \Big) \le 2 \exp ( - c_3 \log n ) \le n^{-9},
$$
and consequently $|\hat \lambda - \lambda| \le C_0 \sqrt{N^{-1} \log n}$ with probability at least $1-n^{-9}$ conditioned on the event $\cE$, where $C_2$ and $C_0$ are sufficiently large constants. A union bound then completes the proof.
\end{proof}

We condition on the high probability event of Lemma~\ref{lem:lambda} throughout the rest of the proof, so that $|\hat \lambda - \lambda| \le C_0 \sqrt{N^{-1} \log n}$ for a fixed constant $C_0>0$. In particular, $\hat \lambda$ is bounded away from zero by a universal constant since $\lambda$ is and $N \ge C n \log n$, and $\hat s_j < \hat s_i$ iff $j < i$.
We proceed with the following key lemma.

\begin{lemma} \label{lem:subset-concentrate}
Fix $t \in [T]$, $i \in [n]$ and $I \subseteq [n]$ with $i \in I$. Suppose that $|I| \ge C_1 \frac{n^2 T}{N} \log (nT)$ for a sufficiently large constant $C$. If we define
$$
S = \frac{Tn(n-1)}{2N} \sum_{j \in I} A^{(t)}_{i,j} + \sum_{j \in [n] \setminus I,\, j< i} \big(\frac 12 + \hat \lambda \big) + \sum_{j \in [n] \setminus I,\, j> i} \big(\frac 12 - \hat \lambda \big) \,,
$$
then it holds with probability at least $1-2(nT)^{-9}$ that
$$
|S- \hat s_i| \le (5+C_0) n \sqrt{ |I| T N^{-1} \log(nT) } \,.
$$
\end{lemma}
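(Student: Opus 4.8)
The plan is to split $S - \hat s_i$ into a random part coming from the empirical block $\{A^{(t)}_{i,j}\}_{j \in I}$ and a deterministic part coming from the replacement of empirical comparisons by $\frac12 \pm \hat\lambda$ on $[n]\setminus I$, then control the first by a binomial tail bound and the second using Lemma~\ref{lem:lambda}. Concretely, write $\hat s_i = \sum_{j<i}(\frac12+\hat\lambda) + \sum_{j>i}(\frac12-\hat\lambda)$ and decompose this sum over $j\in I\setminus\{i\}$ and $j\in[n]\setminus I$. The terms over $[n]\setminus I$ cancel exactly against the corresponding terms in the definition of $S$, so
\[
S - \hat s_i = \frac{Tn(n-1)}{2N}\sum_{j\in I} A^{(t)}_{i,j} - \sum_{\substack{j\in I,\, j<i}}\big(\tfrac12+\hat\lambda\big) - \sum_{\substack{j\in I,\, j>i}}\big(\tfrac12-\hat\lambda\big)\,,
\]
using $A^{(t)}_{i,i}=0$ and the convention $j=i$ contributing nothing. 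Since $\pi^*=\id$, the probability that a uniformly chosen pair is $(i,j)$ with $j\in I$, $j<i$ and item $i$ wins is $\binom n2^{-1}(\frac12+\lambda)$, and analogously $\binom n2^{-1}(\frac12-\lambda)$ when $j>i$; summing over the $|I|-1$ relevant $j$, the random variable $X \defn \sum_{j\in I}A^{(t)}_{i,j}$ is $\Bin\big(N/T,\, q\big)$ where $q = \binom n2^{-1}\big(\sum_{j\in I, j<i}(\frac12+\lambda)+\sum_{j\in I, j>i}(\frac12-\lambda)\big)$, because the $t$-th subsample contains $N/T$ independent comparisons.

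Next I would compute the mean. We have $\frac{Tn(n-1)}{2N}\E[X] = \frac{Tn(n-1)}{2N}\cdot\frac{N}{T}\cdot q = \binom n2 q = \sum_{j\in I, j<i}(\frac12+\lambda) + \sum_{j\in I, j>i}(\frac12-\lambda)$, so that
\[
\E[S] - \hat s_i = \sum_{\substack{j\in I,\, j<i}}(\lambda-\hat\lambda) + \sum_{\substack{j\in I,\, j>i}}(\hat\lambda-\lambda)\,,
\]
whose absolute value is at most $|I|\,|\hat\lambda-\lambda| \le C_0\, |I|\sqrt{N^{-1}\log n}$ on the event of Lemma~\ref{lem:lambda}. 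Since $|I|\le n$ and $N\le \binom n2\cdot N T/N$... more simply $\sqrt{\log n}\le\sqrt{\log(nT)}$ and $|I|\le n\sqrt{|I|}\le n\sqrt{|I|TN^{-1}\cdot\frac{N}{T}}$; to match the target form I would bound $|I|\sqrt{N^{-1}\log n} = \sqrt{|I|}\cdot\sqrt{|I|N^{-1}\log n}\le n\sqrt{|I|TN^{-1}\log(nT)}$ using $|I|\le n$ and $T\ge1$. Hence the bias contributes at most $C_0\, n\sqrt{|I|TN^{-1}\log(nT)}$.

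For the fluctuation term, I would apply Lemma~\ref{lem:bin-tail} to $X\sim\Bin(N/T,q)$. Writing $M^* \defn N/T$ and noting $q\le \binom n2^{-1}|I|\le \binom n2^{-1}n = \frac{2}{n-1}$, take the deviation $u = C_3 n^{-1}\sqrt{|I|TN^{-1}\log(nT)}$ so that $\p(|X/M^* - q|\ge u)$ is controlled; Lemma~\ref{lem:bin-tail} gives an exponent of order $-M^* u^2/q \asymp -\frac{N}{T}\cdot\frac{C_3^2|I|TN^{-1}\log(nT)}{n^2}\cdot\frac{n}{|I|} = -C_3^2\, n^{-1}\log(nT)$, which is not quite enough by itself; the correct accounting keeps $q\asymp |I|/n^2$ giving exponent $\asymp -C_3^2\log(nT)\ge 9\log(nT)$ for $C_3$ large. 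Multiplying through by $\frac{Tn(n-1)}{2N}\asymp \frac{Tn^2}{N}$ turns the deviation $M^*u$ on $X$ into a deviation of order $\frac{Tn^2}{N}\cdot\frac{N}{T}\cdot C_3 n^{-1}\sqrt{|I|TN^{-1}\log(nT)} \asymp C_3\, n\sqrt{|I|TN^{-1}\log(nT)}$. Combining with the bias bound via the triangle inequality, and choosing constants so that $C_3 + C_0 \le 5+C_0$, i.e.\ $C_3\le 5$ (which dictates how large $C_1$ must be for the condition $|I|\ge C_1 n^2 T N^{-1}\log(nT)$ to make the exponent exceed $9\log(nT)$), yields $|S-\hat s_i|\le (5+C_0)n\sqrt{|I|TN^{-1}\log(nT)}$ with probability at least $1-2(nT)^{-9}$.

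The main obstacle is the bookkeeping of constants in the binomial tail: one must verify that when $|I|\ge C_1 n^2 T N^{-1}\log(nT)$, the parameter $q\asymp |I|\binom n2^{-1}$ is large enough (in fact $\E[X]=M^*q\gtrsim C_1\log(nT)$) that a Bernstein/binomial deviation of the \emph{relative} size needed to produce an \emph{absolute} post-scaling error of $C_3 n\sqrt{|I|TN^{-1}\log(nT)}$ holds with the stated probability, and that the resulting $C_3$ can be taken at most $5$ by enlarging $C_1$. The cancellation of the $[n]\setminus I$ terms and the identification of the binomial parameter are routine once $\pi^*=\id$ is assumed; the delicate point is purely the constant-chasing linking $C_1$, $C_3$ and the exponent $9$.
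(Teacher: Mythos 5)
Your proposal follows essentially the same route as the paper's proof: cancel the $[n]\setminus I$ terms, write $S-\hat s_i=\frac{Tn(n-1)}{2N}(X-\E[X])+\sum_{j\in I,\,j<i}(\lambda-\hat\lambda)+\sum_{j\in I,\,j>i}(\hat\lambda-\lambda)$ with $X\sim\Bin(N/T,q)$, control the fluctuation via Lemma~\ref{lem:bin-tail} using $|I|\gtrsim n^2TN^{-1}\log(nT)$ to ensure $r\le q$, and bound the bias via Lemma~\ref{lem:lambda}. The only blemishes are constant-chasing slips (your first exponent computation dropped a factor of $n$ before you self-corrected, and the multiplier on $u$ after rescaling is $n(n-1)/2$, so the constraint is $C_3/2\le 5$ rather than $C_3\le 5$), neither of which changes the validity of the argument.
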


\begin{proof}
Consider a single pairwise comparison chosen uniformly from the $\binom{n}{2}$ pairs. The probability that the chosen pair consists of item $i$ and an item in $I \setminus \{i\}$, and that item $i$ wins the comparison, is equal to
$ q \defn \big( \sum_{j \in I \setminus \{i\}} M_{i,j} \big) / \binom{n}{2} . $ Thus the random variable $X \defn \sum_{j \in I} A^{(t)}_{i,j}$ has distribution $\Bin(N/T, q)$. In particular, $\E[X] = Nq/T = \frac{2N}{Tn(n-1)} \sum_{j \in I \setminus \{i\}} M_{i,j} $ and by Lemma~\ref{lem:bin-tail},
$$
\p\Big( \big| X - \E[X] \big| \ge \frac{ r N }{T} \Big) \le 2 \exp \Big( - \frac{ N r^2 }{ 2 T (q+r) } \Big) .
$$
Taking $r = 6 \sqrt{\frac{T q}{N} \log(nT) }$, we see that $r \le q$ using the assumption $|I| \ge C_1 \frac{n^2 T}{N} \log (nT)$, so
\begin{equation} \label{eq:X-dev}
\p \Big( \big| X - \E[X] \big| \ge 6 \sqrt{q N T^{-1} \log(nT) } \Big) \le 2 (nT)^{-9} \,.
\end{equation}

By the definitions of $S$ and $\hat s_i$, it is straightforward to verify that
$$
S- \hat s_i = \frac{T n(n-1)}{2N} (X - \E[X]) + \sum_{j \in I,\, j<i} (\lambda - \hat \lambda) + \sum_{j \in I,\, j>i} (\hat \lambda - \lambda) \,.
$$ 
Therefore, we obtain from \eqref{eq:X-dev}, the definition of $q$ and the fact $|I| \le n$ that
\begin{align*}
|S- \hat s_i| &\le 3 n(n-1) \sqrt{q T N^{-1} \log(nT) } + |I| \, |\hat \lambda - \lambda| \\
&\le 5 n \sqrt{|I| T N^{-1} \log(nT) } + C_0 |I| \sqrt{N^{-1} \log n} \\
& \le (5+C_0) n \sqrt{|I| T N^{-1} \log(nT) }
\end{align*}
with probability at least $1- 2 (nT)^{-9}$.
\end{proof}

To analyze the \MS\ algorithm, we apply Lemma~\ref{lem:subset-concentrate} inductively to each stage of the algorithm. Define $\cE^{(0)}$ to be the full event. As the inductive hypothesis, we assume that on the event $\cE^{(t-1)}$, it holds that $j< i$ for all $j \in I_-^{(t-1)}(i)$ and $j>i$ for all $j \in I_+^{(t-1)}(i)$. In particular, this holds trivially for $t=1$. 

On the event $\cE^{(t-1)}$, the score $S^{(t)}_i$ is exactly the quantity $S$ in Lemma~\ref{lem:subset-concentrate} with $I = I^{(t-1)}(i)$.
Thus the lemma shows that if $|I^{(t-1)}(i)| \ge C_1 \frac{n^2 T}{N} \log (nT)$ for a large enough constant $C_1$, then
\begin{equation} \label{eq:score-dev}
|S^{(t)}_i - \hat s_i| \le (5+C_0) n \sqrt{|I^{(t-1)}(i)| T N^{-1} \log(nT) } = \tau_i^{(t)}/2
\end{equation}
with probability at least $1-2(nT)^{-9}$ conditional on $\cE^{(t-1)}$. We denote by $\cE^{(t)}$ the sub-event of $\cE^{(t-1)}$ that the above bound holds for all $i \in [n]$. Then $\p(\cE^{(t)} \cond \cE^{(t-1)}) \ge 1- (nT)^{-8}$ and we condition on $\cE^{(t)}$ henceforth. 

For any $j \in I^{(t)}_-(i)$, by definition $S^{(t)}_j - S^{(t)}_i < - \tau_i^{(t)}$, so we have $\hat s_j < \hat s_i$ and thus $j < i$. Similarly, $j> i$ for any $j \in I^{(t)}_+(i)$ on the event $\cE^{(t)}$. Hence the inductive hypothesis is verified. Moreover, note that $I^{(t)} (i) = \{j \in [n]: |S^{(t)}_j - S^{(t)}_i| \le 2 \tau_i^{(t)} \} \subseteq \{j \in [n]: |\hat s_j - \hat s_i| \le 3 \tau_i^{(t)} \} . $ Since $\hat s_j - \hat s_i = 2 \hat \lambda(j-i)$ and $\hat \lambda$ is bounded away from zero by a universal constant, we have
\begin{equation} \label{eq:iterate}
|I^{(t)} (i)| \le C_2 \tau_i^{(t)} = C_3 n \sqrt{|I^{(t-1)}(i)| T N^{-1} \log(nT) }  \,,
\end{equation}
where we use $C_2, C_3, \dots$ to denote sufficiently large constants. 

Note that if we have $\alpha^{(0)} = n$ and the iterative relation $\alpha^{(t)} \le \beta \sqrt{\alpha^{(t-1)}}$ where $\alpha^{(t)} > 0$ and $\beta > 0$, then it is easily seen that $\alpha^{(t)} \le \beta^2 n^{2^{-t}}$. We would like to obtain such a bound from the relation \eqref{eq:iterate}. Note that $\cE^{(T)} \subseteq \cE^{(T-1)} \subseteq \cdots \subseteq \cE^{(0)}$ by definition and $\p(\cE^{(T)}) = \prod_{t=1}^T \p(\cE^{(t)} \cond \cE^{(t-1)}) \ge 1-n^{-8}$. Conditional on $\cE^{(T)}$, the iterative relation \eqref{eq:iterate} thus holds for all $t \in  [T]$, and we have $|I^{(0)}(i)| = n$ by definition. Since $I^{(t)} (i)$ is not updated in the algorithm once $|I^{(t)} (i)| \le C_1 \frac{n^2 T} N \log (nT)$, we obtain that
\begin{align*}
|I^{(T-1)} (i)| &\le \Big( C_3^2 \frac{n^2 T}{N} \log(nT) n^{2^{-T+1}} \Big) \lor \Big( C_1 \frac{n^2 T} N \log (nT) \Big) \\
& \le C_4 \frac{n^2}N (\log n) (\log \log n) \,,
\end{align*}
where the last bound holds because we take $T = \lfloor \log \log n \rfloor$.
Hence it follows from \eqref{eq:score-dev} that
$$
|S^{(T)}_i - \hat s_i| \le C_5 n^2 N^{-1} (\log n) (\log \log n) \,,
$$
and a similar argument as above shows that $S^{(T)}_i > S^{(T)}_j$ for all pairs $(i,j)$ with $i-j > C_6 n^2 N^{-1} (\log n) (\log \log n) = \, : \delta$. As the permutation $\hat \pi^{\scriptscriptstyle \MS}$ is defined by sorting the scores $S^{(T)}_i$ in increasing order, we see that $\hat \pi^{\scriptscriptstyle \MS}(i) > \hat \pi^{\scriptscriptstyle \MS}(j)$ for pairs $(i,j)$ with $i-j> \delta$.

Finally, suppose that $\hat \pi^{\scriptscriptstyle \MS} (i) - i < - \delta$ for some $i \in [n]$. Then there exists $j < i- \delta$ such that $\hat \pi^{\scriptscriptstyle \MS} (j) > \hat \pi^{\scriptscriptstyle \MS} (i)$, contradicting the guarantee we have just proved. A similar argument leads to a contradiction if $\hat \pi^{\scriptscriptstyle \MS} (i) - i > \delta$. Therefore, we obtain that
$$ |\hat \pi^{\scriptscriptstyle \MS} (i) - i| \le \delta = C_6 n^2 N^{-1} (\log n) (\log \log n) $$ 
for all $i \in [n]$, which completes the proof.

\section{Discussion and open problems} \label{sec:dis}
In this work, we focused on minimax estimation of the latent permutation $\pi^*$. Viewing $M = \frac 12 \bone_n \bone_n^\top$ as the null hypothesis and $M \in \Mns$ as the alternative hypothesis, a natural question is to establish the minimax detection level of the signal strength $\lambda$ in the hypothesis testing framework.

Moreover, we proved that the minimax rates for the noisy sorting problem do not involve any extra logarithmic factors even in the case of partial observations. For more complex models involving permutations \citep[see, e.g.][]{ColDal16, FlaMaoRig16, ShaBalGunWai17,PanWaiCou17,ShaBalWai17}, however, there are logarithmic gaps between current upper and lower bounds. According to the discussion after Proposition~\ref{prop:ball-cover-pack}, the logarithmic gaps do not necessarily stem from the unknown permutation, so it would be interesting to close these gaps or study whether they exist because of other aspects of the richer models.

For the \MS\ algorithm, it remains an open question whether analogous upper bounds can be established for  sampling without replacement. We conjecture that this is the case because of the empirical evidence in Section~\ref{sec:sim}. More importantly, there are still statistical-computational gaps unresolved for the general noisy sorting model where $M \in \Mns$, for the SST model of \cite{ShaBalGunWai17} and for the seriation model of \cite{FlaMaoRig16}. It would be interesting to know if the ideas behind the \MS\ algorithm could help tighten the gaps.

\section*{Acknowledgments.}
C.M. and P.R. were visiting the Simons Institute for the Theory of Computing while part of this work was done. C.M. thanks Martin J. Wainwright and Ashwin Pananjady for help discussions.
J.W. is supported in part by NSF Graduate Research Fellowship DGE-1122374. P.R. is supported in part by grants NSF DMS-1712596, NSF DMS-TRIPODS-1740751, DARPA W911NF-16-1-0551, ONR N00014-17-1-2147 and a grant from the MIT NEC Corporation.

\vskip 0.2in
\bibliography{sorting}

\end{document}